\DeclareFontFamily{OT1}{pzc}{}
\DeclareFontShape{OT1}{pzc}{m}{it}{<-> s * [1.10] pzcmi7t}{}
\DeclareMathAlphabet{\mathpzc}{OT1}{pzc}{m}{it}
\newcommand{\model}[0]{\textsc{KnowFormer}\xspace}
\newcommand*{\ldblbrace}{\{\mskip-5mu\{}
\newcommand*{\rdblbrace}{\}\mskip-5mu\}}
\theoremstyle{plain}
\newtheorem{theorem}{Theorem}[section]
\newtheorem{lemma}[theorem]{Lemma}
\theoremstyle{definition}
\newtheorem{definition}[theorem]{Definition}
\theoremstyle{remark}
\icmltitlerunning{\model: Revisiting Transformers for Knowledge Graph Reasoning}
\begin{document}

\twocolumn[
\icmltitle{\model: Revisiting Transformers for Knowledge Graph Reasoning}



\icmlsetsymbol{star}{*}
\icmlsetsymbol{dagger}{$\dagger$}

\begin{icmlauthorlist}
\icmlauthor{Junnan Liu}{yyy,comp,star}
\icmlauthor{Qianren Mao}{yyy,dagger}
\icmlauthor{Weifeng Jiang}{sch}
\icmlauthor{Jianxin Li}{yyy,comp,dagger}
\end{icmlauthorlist}

\icmlaffiliation{yyy}{Zhongguancun Laboratory, Beijing, P.R.China.}
\icmlaffiliation{comp}{SCSE, Beihang University, Beijing, P.R.China.}
\icmlaffiliation{sch}{SCSE, Nanyang Technological University, Singapore}

\icmlcorrespondingauthor{Junnan Liu}{to.liujn@outlook.com}

\icmlkeywords{Machine Learning, ICML}

\vskip 0.3in
]



\printAffiliationsAndNotice{\textsuperscript{*}Work done during the internship at Zhongguancun Laboratory. \textsuperscript{$\dagger$}Corresponding authors.} 

\begin{abstract}
Knowledge graph reasoning plays a vital role in various applications and has garnered considerable attention. 
Recently, path-based methods have achieved impressive performance.
However, they may face limitations stemming from constraints in message-passing neural networks, such as missing paths and information over-squashing.
In this paper, we revisit the application of transformers for knowledge graph reasoning to address the constraints faced by path-based methods and propose a novel method \model.
\model utilizes a transformer architecture to perform reasoning on knowledge graphs from the message-passing perspective, rather than reasoning by textual information like previous pretrained language model based methods.
Specifically, we define the attention computation based on the query prototype of knowledge graph reasoning, facilitating convenient construction and efficient optimization. 
To incorporate structural information into the self-attention mechanism, we introduce structure-aware modules to calculate query, key, and value respectively. 
Additionally, we present an efficient attention computation method for better scalability. 
Experimental results demonstrate the superior performance of \model compared to prominent baseline methods on both transductive and inductive benchmarks.

\end{abstract}

\section{Introduction}
Knowledge graphs (KGs) are structured knowledge bases that store known facts in the form of triplets~\cite{HoganBCdMGKGNNN21,JiPCMY22}.
Each triplet consists of a head entity, a relation, and a tail entity. 
However, real-world KGs often suffer from high incompleteness, making it challenging to retrieve the desired facts~\cite{WangMWG17,JiPCMY22}.
Knowledge graph reasoning, which involves inferring new facts based on existing ones, is a fundamental and indispensable task in KGs with a wide range of applications~\cite{QiuZFLJLLZ19,0003W0HC19,HuangZLL19,AbujabalRYW18,ChengYWZ020,zeng2022toward}.

\begin{figure*}[htb]
    \centering
    \includegraphics[scale=0.4]{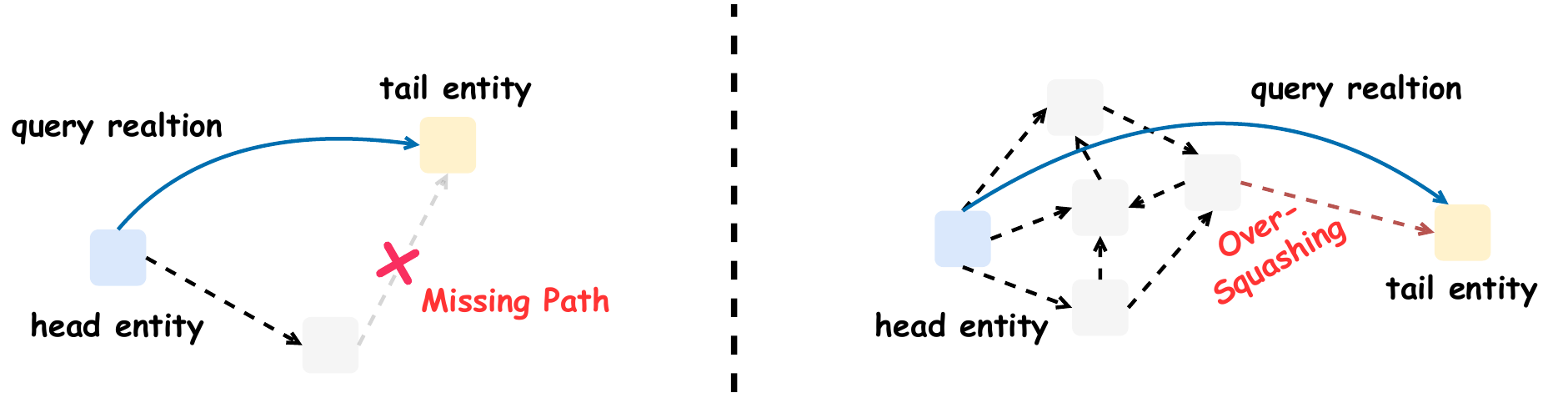}
    \vspace{-0.2cm}
    \caption{Path-based methods could be limited by the missing paths~\cite{FranceschiNPH19} and over-squashing information~\cite{0002Y21}.}
    \label{fig:fig.0}
\end{figure*}

A substantial line of previous research has been dedicated to developing effective and robust methods for knowledge graph reasoning. 
Knowledge graph embedding (KGE) methods focus on embedding entities and relations into a low-dimensional space~\cite{BordesUGWY13,SunDNT19,YangYHGD14a,TrouillonDGWRB17,Li0LH0LSWDSXZ22}.
Despite their impressive performance, these embedding-based methods struggle to generalize to inductive scenarios since they cannot leverage the local structures of knowledge graphs~\cite{TeruDH20}.
As a result, researchers have turned to path-based methods to enhance reasoning performance and improve generalization. 
These methods learn pairwise representations from subgraphs ~\cite{TeruDH20,MaiZY021,WangZPDSS22,ChamberlainSRFM23} or relational paths~\cite{ZhuZXT21,ZhangY22,ZhangZY0023,zhu2023net} between entities which can be used to predict unseen facts.
However, path-based methods can be limited by the missing paths~\cite{FranceschiNPH19} and over-squashing information~\cite{0002Y21} as shown in \cref{fig:fig.0}.

Recent works have leveraged the transformers for knowledge graph reasoning, drawing inspiration from the success of transformer-based models in various domains~\cite{VaswaniSPUJGKP17,DevlinCLT19,DosovitskiyB0WZ21,RivesMSGLLGOZMF21}.
Typically, these studies encode components of knowledge graphs using their text descriptions through pretrained language models (PLMs) and utilize either a discriminative paradigm~\cite{LvL00LLLZ22,0046ZWL22} or a generative paradigm~\cite{XieZLDCXCC22,SaxenaKG22} to predict new facts.
However, there are two main issues: (1) textual descriptions often contain ambiguity and require domain-specific knowledge for accurate encoding, \textit{e.g.}, pretrained language models like BERT~\cite{DevlinCLT19}, which are trained with commonsense knowledge, may not transfer well to domain-specific KGs; and (2) capturing structural information, which can be crucial for knowledge graph reasoning owing to the ~\textit{low-rank assumption}~\cite{KorenBV09,LacroixUO18} is challenging, and some approaches address this by utilizing path and context encoding or pretraining on large-scale KGs as alternatives~\cite{ChenLG0ZJ21,LiWM23}.

In this paper, we aim to further investigate the potential benefits of applying the \textit{transformer architecture}~\cite{VaswaniSPUJGKP17} to the task of knowledge graph reasoning.
Contrary to previous approaches that utilize pretrained language models for encoding text descriptions, we leverage the attention mechanism within the transformer architecture to capture interactions between any pair of entities.
Our principal contribution is the introduction of an expressive and scalable attention mechanism tailored specifically to knowledge graph reasoning. 
The resulting model, named \textbf{\model}, provides informative representations for knowledge graph reasoning in both transductive and inductive scenarios.
It can address the limitations of vanilla path-based methods, such as the path missing and information over-squashing.
Specifically:
\begin{itemize}[itemsep=3pt,topsep=0pt,parsep=1pt]
    \item We redefine the self-attention mechanism, originally introduced by \citet{VaswaniSPUJGKP17}, as a weighted aggregation of pairwise information for specific queries based on the plausibility of entity pairs as query prototypes. 
    This redefinition enables us to perform attention computation on multi-relational KGs and reduce the modeling complexity.
    \item For constructing the attention mechanism, we introduce two modules that generate informative representations for query, key, and value, respectively.
    These modules are designed based on relational message passing neural networks, allowing us to consider structural information during attention calculation.
    \item To improve scalability, we adopt an instance-based similarity measure~\cite{CuiC22} to reduce the significant number of degrees of freedom and introduce an approximation method that maintains complexity linearly proportional to entities. Moreover, we provide theoretical guarantees for its stability and expressivity.
    \item We also provide empirical evidence of the effectiveness of our proposed \model on knowledge graph reasoning benchmarks in both transductive and inductive settings, surpassing the performance of salient baselines.
\end{itemize}

\section{Related Work}
We classify the related work on knowledge graph reasoning into three main paradigms: embedding-based methods, path-based methods, and transformer models.

\paragraph{Embedding-Based Methods.}
Embedding-based methods aim to learn distributed representations for entities and relations by preserving the triplets in the knowledge graph.
Notable early methods include TransE~\cite{BordesUGWY13}, DistMult~\cite{YangYHGD14a} and ComplEX~\cite{TrouillonDGWRB17}.
Subsequent work has focused on improving the score function or embedding space of these methods to enhance the modeling of semantic patterns~\cite{SunDNT19,TangHWHZ20,ZhangCZW20,Li0LH0LSWDSXZ22}.
In addition, some researchers have explored the application of neural networks in an encoder-decoder paradigm to obtain adaptive and robust embeddings. 
Representative methods include convolutional neural networks ~\cite{DettmersMS018,NguyenNNP18} and graph neural networks~\cite{SchlichtkrullKB18,VashishthSNT20,YouGYL21}.
Although embedding methods have demonstrated promising performance in knowledge graph reasoning and scalability on large KGs, 
these embeddings are hard to generalize to unseen entities and relations, thus limiting their applicability in the inductive setting.

\paragraph{Path-Based Methods.}
Path-based methods in knowledge graph reasoning have their origins in traditional heuristic similarity approaches, which include measuring the weighted count of paths~\cite{katz1953new}, random walk probability ~\cite{page1998pagerank}, and shortest path length~\cite{Liben-NowellK07}.
In recent years, there have been proposals to employ neural networks to encode paths between entities, such as recurrent neural networks~\cite{NeelakantanRM15}, and aggregate these representations for reasoning. 
Another research direction focuses on learning probabilistic logical rules over KGs~\cite{YangYC17,SadeghianADW19} and utilizing these rules to assign weights to paths. 
More recent works have achieved state-of-the-art performance by incorporating well-designed message passing neural networks~\cite{0004RL21,ZhuZXT21,ZhangY22}.
These methods enhance expressivity through the use of \textit{target distinguishable} initial function and \textit{relational} message and aggregation functions.
PathCon~\cite{0004RL21}, for instance, introduces a relational message-passing framework for the KG completion task and achieves impressive results.
However, prominent path-based methods still face limitations, including the recognized shortcomings of message-passing neural networks such as incompleteness~\cite{FranceschiNPH19} and over-squashing~\cite{0002Y21}. 

\paragraph{Transformers for Knowledge Graph Reasoning.}
Currently, most research utilizes pretrained models based on transformer architecture to encode textual descriptions and leverage its knowledge and the semantic understanding ability for reasoning.
For instance, KG-BERT~\cite{abs-1909-03193} and PKGC~\cite{LvL00LLLZ22} employ a pretrained language model encoder to represent KG triples with text descriptions, enabling the inference of new facts through a classification layer using a special token representation. 
In contrast, some studies explore a sequence-to-sequence approach for generating reasoning results~\cite{XieZLDCXCC22,SaxenaKG22}.
This paradigm has received increasing attention with the emergence of large language models.
To bridge the gap between unstructured textual descriptions and structured KGs, researchers have explored incorporating contextual information by sampling and encoding neighborhood triplets or paths~\cite{XieZLDCXCC22,ChenLG0ZJ21,LiWM23}.
However, these methods face limitations due to their heavy reliance on text description, including the lack of textual data, domain knowledge requirements, and the neglect of structural information in KGs.
In this paper, we revisit how to perform knowledge graph reasoning leveraging the transformer architecture from the perspective of knowledge graph structure.

\section{Preliminary}
In this section, we introduce the background knowledge of knowledge graphs and knowledge graph reasoning.
Due to the page limitations, more preliminaries about transformers can be found in the appendix.

\paragraph{Knowledge Graph.}
Typically, a knowledge graph $\mathcal{G} = \{\mathcal{V}, \mathcal{E}, \mathcal{R}\}$ is a collection of triplets $\mathcal{E} = \{(h_i,r_i,t_i)\;|\; h_i,t_i \in \mathcal{V}, r_i \in \mathcal{R}\}$ consist a set of entities $\mathcal{V}$ and a set of relations $\mathcal{R}$.
Each triplet is a relational edge from head entity $h_i$ to tail entity $t_i$ with the relation $r_i$.
For ease of notation, we can also represent a fact as $r(u,v) \in \mathcal{E}$ where $u,v \in \mathcal{V}$ and $r\in \mathcal{R}$.
Additionally, we define the neighborhood set of an entity $u \in \mathcal{V}$ relative to a relation $r\in \mathcal{R}$ as $\mathcal{N}_r(u) = \{ v\; |\; r(u,v) \in \mathcal{E} \}$.

\paragraph{Knowledge Graph Reasoning.}
Given a knowledge graph $\mathcal{G} = \{\mathcal{V}, \mathcal{E}, \mathcal{R}\}$, the goal of knowledge graph reasoning is to leverage existing facts to infer the missing elements of the query fact $(h, r_q, t)$, where $r_q$ is a query relation. 
Based on the type of missing elements, there are three sub-tasks: head reasoning to infer $(?, r_q, t)$, tail reasoning to infer $(h, r_q, ?)$, and relation reasoning to infer $(h, ?, t)$. 
This paper mainly focuses on the tail reasoning task, as the other tasks can be transformed into the same form.

\section{Proposed Method}
In this section, we introduce the construction of the \model.
Specifically, we focus on how to create the proposed attention mechanism and integrate it into a transformer model.

\subsection{Attention Computation of \model} \label{sec:sec.4.1}
We adopt a modified definition of self-attention, similar to the formulation used in \citet{TsaiBYMS19} and \citet{ChenOB22}.
Assuming that $\boldsymbol{X} \in \mathbb{R}^{n\times d}$ is input features, we have:
\begin{equation} \label{eq:eq.2}
    \text{Attn}(\boldsymbol{x}_u) = \sum_{v \in \mathcal{V}}  \frac{\kappa(f_q(\boldsymbol{x}_u),f_q(\boldsymbol{x}_v))}{\sum_{w \in \mathcal{V}} \kappa(f_q(\boldsymbol{x}_u),f_q(\boldsymbol{x}_w))} f_v(\boldsymbol{x}_v),
\end{equation}
here, $f_q(\cdot)$ represents the query function, $f_v(\cdot)$ represents the value function, and $\kappa(\cdot,\cdot): \mathbb{R}^{d} \times \mathbb{R}^{d}  \rightarrow \mathbb{R}^+$ is a positive-definite kernel that measures the pairwise similarity. 

In a knowledge graph, different relation types $r \in \mathcal{R}$ determine various modes of interaction between entity pairs. 
Consequently, a straightforward approach is to incorporate relation-specific attention to model these diverse interactions:
\begin{equation} \label{eq:eq.3}
    \text{Attn}(\boldsymbol{x}_u)\!=\!\sum_{r\in \mathcal{R}}\! \sum_{u \in \mathcal{V}}  \frac{\kappa(f_{q}^r(\boldsymbol{x}_u),f_{q}^r(\boldsymbol{x}_v))}{\sum_{w \in \mathcal{V}} \kappa(f_{q}^r(\boldsymbol{x}_u),f_{q}^r(\boldsymbol{x}_w))} f_{v}^r(\boldsymbol{x}_v),
\end{equation}
where $f^r(.)$ means a relation-specific function.
However, this approach will result in significant computational overhead~($\mathcal{O}(\lvert\mathcal{R} \rvert \cdot \lvert \mathcal{V} \rvert^2)$) and suboptimal optimization due to a high degree of freedom.
Inspired by instance-based learning~\cite{StanfillW86,CuiC22}, we consider modeling the pairwise interactions based on the plausibility of entity pairs as query prototypes.
\begin{definition}[Query Prototype] \label{def.1}
Given a relation $r\in \mathcal{R}$, entity $u\in \mathcal{V}$ and $v\in \mathcal{V}$ are the prototypes for relation $r$ if $(u,r,?) \wedge(v,r,?) \ne \emptyset$.
\end{definition}
For instance, \texttt{Barack Obama} and \texttt{Joseph Biden} are the prototypes for relation \texttt{Nationality}, since we have $(\texttt{Barack Obama}, \texttt{Nationality}, ?) \wedge (\texttt{Joseph Biden}, \texttt{Nationality}, ?) = \{\texttt{USA}\}$.
Based on \cref{def.1}, given query $(h, r_q, ?)$, we propose to calculate attention scores between entity pairs referring to their plausibility as prototypes. 
Our motivation is that the greater the plausibility of two entities as prototypes, the more \textit{similarity} they exhibit under the current query relation, leading to higher attention scores.
In this way, for each query in the dataset, we do not need to consider the interaction of entities for all relations, reducing the complexity to $\mathcal{O}(|\mathcal{V}|^2)$.
Our idea considers the plausibility of two entities $u$ and $v$ being prototypes to the query relation $r_q$ and enables information propagation between them based on the plausibility.
To achieve this, we propose to utilize the function $f_q(\cdot)$ to obtain informative representations of entities to distinguish query prototypes, the function $\kappa(\cdot,\cdot)$ to serve as the score function for measuring plausibility, and lastly, $f_v(\cdot)$ to generate propagated information.
Next, we will provide a detailed explanation of constructing the attention mechanism by defining the functions $f_q(\cdot)$, $f_v(\cdot)$, and $\kappa(\cdot,\cdot)$.

\paragraph{Query Function.}
Query function $f_q(\cdot)$ is designed to provide informative representations, denoted as $\boldsymbol{\widetilde{z}}_u$, for each entity $u$ to distinguish query prototypes effectively.
A key insight is that considering the context of each entity on the knowledge graph (\textit{i.e.}, its $k$-hop neighbor facts) is essential in determining query prototypes. 
For example, entities such as \texttt{Barack Obama} and \texttt{Joseph Biden} can serve as prototypes for the relation \texttt{Nationality} due to their common neighbor relation \texttt{President of}. 
We introduce Q-RMPNN, a relational message-passing neural network~\cite{GilmerSRVD17,XuHLJ19,0004RL21} designed to incorporate neighbor facts into the entity representations.
Q-RMPNN consists of two stages: (1) each entity sends relational messages to its neighbors; (2) each entity aggregates the received relational messages and updates its representation. 
Drawing inspiration from knowledge graph embeddings, we generate a relational message $\boldsymbol{m}_{v|u,r}$ for each fact $r(u,v)$ by maximizing its continuous truth value~\cite{WangSWS23}:
\begin{equation}
    \boldsymbol{m}_{v|u,r} = \mathop{\arg \max}_{\boldsymbol{w} \in \mathcal{D}} \phi (\boldsymbol{z}_u, \boldsymbol{\hat{r}}, \boldsymbol{w}) = g(\boldsymbol{z}_u, \boldsymbol{\hat{r}}),
\end{equation}
where $\mathcal{D}$ indicates the range domain for the entity embeddings and $\boldsymbol{z}_u$ is the representation of entity $u$ within $f_q(\cdot)$, $\boldsymbol{\hat{r}}$ is the representation of relation $r$ conditioned on query relation $r_q$, $\phi(\cdot,\cdot,\cdot)$ is a score function and $g(\cdot,\cdot)$ is an estimation function.
In this paper, we adopt the DistMult method~\cite{YangYHGD14a} as the foundation for our implementation similar to~\citet{ZhuZXT21}, to say:
\begin{equation}
    \begin{aligned}
        & \boldsymbol{\hat{r}} = \boldsymbol{R}[r_q] \cdot \boldsymbol{W}_r + \boldsymbol{b}_r, \; g(\boldsymbol{z}_u, \boldsymbol{\hat{r}}) = \boldsymbol{z}_u \odot \boldsymbol{\hat{r}}, \\
        & \phi(\boldsymbol{z}_u, \boldsymbol{\hat{r}}, \boldsymbol{w}) = \left\langle g(\boldsymbol{z}_u, \boldsymbol{\hat{r}}),\boldsymbol{w} \right\rangle,
    \end{aligned}
\end{equation}
where $\boldsymbol{R}$ is the input features of relation set $\mathcal{R}$, and $\boldsymbol{W}_r,\boldsymbol{b}_r$ is the relation-specific parameters.
At the $l$-th layer of $f_q(\cdot)$, we fuse $\boldsymbol{z}_u^{(l)}$ for each $u \in \mathcal{V}$ by the summation of aggregated information from the $(l-1)$-th layer:
\begin{equation} \label{eq:eq.6}
    \begin{aligned}
        \boldsymbol{z}_u^{(0)} &\leftarrow [\boldsymbol{x}_u, \boldsymbol{\epsilon}], \\
        \boldsymbol{z}_u^{(l)} &\leftarrow \Phi^{(l)} \left(\boldsymbol{\alpha}^{(l)} \cdot \boldsymbol{z}_u^{(l-1)} + \sum_{r(v,u)\in\mathcal{E}} \boldsymbol{m}_{u|v,r}^{(l-1)}  \right),
    \end{aligned}
\end{equation}
where $\boldsymbol{x}_u \in \mathbb{R}^d$ represents the input features, $\boldsymbol{\epsilon} \sim N(\boldsymbol{0}, \boldsymbol{I})$ denotes Gaussian noise to distinguish different source entities, $[\cdot,\cdot]$ indicates concatenate function, $\boldsymbol{\alpha}^{(l)}$ captures layer-specific parameters to retain the original information, and $\boldsymbol{\Phi}^{(l)}(\cdot)$ represents a layer-specific update function parameterized by an MLP network. 
After $\widetilde{L}$ layers of updates, we obtain the final representation $\boldsymbol{\widetilde{z}}_u$ given by $\boldsymbol{\widetilde{z}}_u = \boldsymbol{z}_u^{(\widetilde{L})}$.
Through Q-RMPNN, $\boldsymbol{\widetilde{z}}_u$ is optimized to capture $\widetilde{L}$-hop neighbor facts, resulting in a structure-aware and knowledge-oriented representation.

\paragraph{Value Function.}
We leverage the value function to generate pairwise representations $\boldsymbol{\hat{z}}_u$ conditioned on the query relation $r_q$ for the query $(h,r_q,?)$. 
The structural information between node pairs is crucial in knowledge graph reasoning, which can be viewed as a link prediction task~\cite{ZhangLXWJ21}. 
However, it is challenging for the attention mechanism to capture the structural information of the input graph explicitly. 
To address this, we design V-RMPNN to encode pairwise structural information into the value. 
Specifically, V-RMPNN is implemented as follows:
\begin{equation} \label{eq:eq.7}
    \begin{aligned}
        \boldsymbol{z}_{u|h,r_q}^{(0)} &\leftarrow [\boldsymbol{x}_u, \mathbb{I}_{u=h} \odot \boldsymbol{1}], \\
        \boldsymbol{z}_{u|h,r_q}^{(l)} &\leftarrow \Psi^{(l)} \left(\boldsymbol{\beta}^{(l)} \cdot \boldsymbol{z}_{u|h,r_q}^{(l-1)} + \sum_{r(v,u)\in\mathcal{E}} \boldsymbol{m}_{u|v,r}^{(l-1)}  \right),
    \end{aligned}
\end{equation}
where $\boldsymbol{x}_u$, $\boldsymbol{m}_{u|v,r}^{(l-1)}$, $\boldsymbol{\beta}^{(l)}$, $[\cdot,\cdot]$ and $\boldsymbol{\Psi}^{(l)}(\cdot)$ are defined similarly to Q-RMPNN. 
The term $\mathbb{I}_{u=h} \odot \boldsymbol{1}$ represents head entity labeling features to enhance node representation, which is essential for the link prediction task~\cite{ZhangLXWJ21,YouGYL21}. 
After $\widehat{L}$ layers of updates, the final representation $\boldsymbol{\hat{z}}_u$ is obtained as $\boldsymbol{\hat{z}}_u = \boldsymbol{z}_{u|h,r_q}^{(\widehat{L})}$.

\begin{figure}[tb]
    \centering
    \includegraphics[scale=0.6]{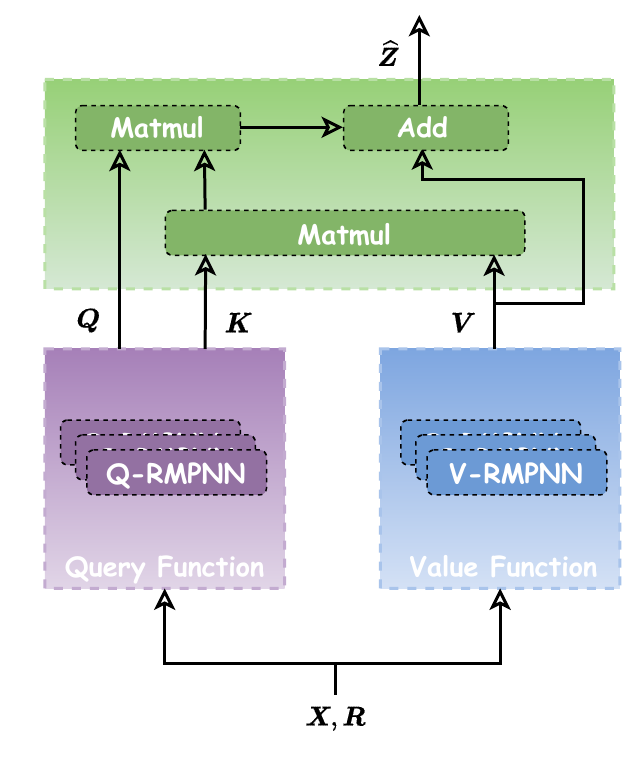}
    \caption{Overview of the proposed attention mechanism, which takes entity features $\boldsymbol{X}$ and $\boldsymbol{R}$ as input and outputs $\boldsymbol{\widehat{Z}}$ for all $u\in\mathcal{V}$.}
    \label{fig:fig.1}
\end{figure}

\paragraph{Kernel Function.}
The kernel function $\kappa(\boldsymbol{\widetilde{z}}_u,\boldsymbol{\widetilde{z}}_v)$ is employed to quantify pairwise similarity.
One common choice is the exponential kernel specified as follows:
\begin{equation}
    \kappa_{\text{exp}}(\boldsymbol{\widetilde{z}}_u,\boldsymbol{\widetilde{z}}_v) = \exp \left( \langle \boldsymbol{\widetilde{z}}_u \boldsymbol{W}_1,\boldsymbol{\widetilde{z}}_v \boldsymbol{W}_2 \rangle / \sqrt{d} \right),
\end{equation}
where $\boldsymbol{W}_1$ and $\boldsymbol{W}_2$ are the linear projection matrixes and the bias is omitted for convenience.
However, the \textit{dot-then-exponentiate operation} will lead to quadratic complexity since we must calculate the inner product of the query matrix with the key matrix before the exponentiate function.
In practice, we approximate the exponential function $\exp(\cdot)$ using the \textit{first-order Taylor expansion} around zero~\cite{abs-2007-14902,WuYZHWY23,DassWSLYWL23}, resulting in:
\begin{equation} \label{eq.9}
    \kappa_{\text{exp}}(\boldsymbol{\widetilde{z}}_u,\boldsymbol{\widetilde{z}}_v) \approx 1 +  \langle \boldsymbol{\widetilde{z}}_u\boldsymbol{W}_1,\boldsymbol{\widetilde{z}}_v\boldsymbol{W}_2 \rangle.
\end{equation}
To ensure non-negativity of the attention scores, we further normalize the inputs by their Frobenius norm, leading to the kernel function implementation as follows:
\begin{equation} \label{eq.10}
    \kappa(\boldsymbol{\widetilde{z}}_u,\boldsymbol{\widetilde{z}}_v) = 1 +  \left\langle \frac{\boldsymbol{\widetilde{z}}_u\boldsymbol{W}_1}{\Vert \boldsymbol{\widetilde{z}}_u\boldsymbol{W}_1 \Vert_{\mathcal{F}}}, \frac{\boldsymbol{\widetilde{z}}_v\boldsymbol{W}_2}{\Vert \boldsymbol{\widetilde{z}}_v\boldsymbol{W}_2 \Vert_{\mathcal{F}} } \right\rangle.
\end{equation}
Based on \cref{eq.9} and \cref{eq.10}, we can initially compute the inner product of the key matrix with the value matrix, followed by calculating the inner product of the query matrix with the resultant matrix as shown in \cref{fig:fig.1}, thereby reducing the complexity to linear.
\begin{restatable}[Approximation Error for Exponential Kernel]{theorem}{theoa} 
\label{theo:theo.1} 
For each $u,v \in \mathcal{V}$, the approximation error $\Delta = \lvert \kappa(\boldsymbol{\widetilde{z}}_u,\boldsymbol{\widetilde{z}}_v) - \kappa_{\text{exp}}(\boldsymbol{\widetilde{z}}_u,\boldsymbol{\widetilde{z}}_v) \rvert$ will be bounded by $\mathcal{O}(e^{\gamma}/2)$, where $\gamma \in (0, 1)$.
\end{restatable}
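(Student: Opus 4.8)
The plan is to reduce $\Delta$ to the remainder term of the first-order Taylor expansion of $\exp$ and bound it using the Lagrange form. First I would fix notation: write $\boldsymbol{a} = \boldsymbol{\widetilde{z}}_u \boldsymbol{W}_1$, $\boldsymbol{b} = \boldsymbol{\widetilde{z}}_v \boldsymbol{W}_2$, and let $x$ denote the normalized inner product $\langle \boldsymbol{a}/\Vert\boldsymbol{a}\Vert_{\mathcal{F}},\, \boldsymbol{b}/\Vert\boldsymbol{b}\Vert_{\mathcal{F}}\rangle$ appearing in \cref{eq.10}, so that the implemented kernel is exactly $\kappa = 1 + x$. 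Set $\gamma := |x|$. By the Cauchy--Schwarz inequality $\gamma \le 1$, and in the non-degenerate case where $\boldsymbol{a}$ and $\boldsymbol{b}$ are neither parallel nor orthogonal we get $\gamma \in (0,1)$, which is the regime asserted in the statement.

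The second step is to make precise the quantity against which $\kappa$ is compared. Absorbing the $1/\sqrt{d}$ scaling into $\boldsymbol{W}_1, \boldsymbol{W}_2$ (consistent with the passage from $\kappa_{\text{exp}}$ to \cref{eq.9}) and applying the same Frobenius normalization used to define $\kappa$, the exponential kernel evaluated at the relevant argument becomes $\kappa_{\text{exp}} = e^{x}$ for the same scalar $x$; hence $\Delta = |e^{x} - (1+x)|$. I expect this identification --- matching the point at which the two kernels are evaluated, since their raw definitions differ by the $\sqrt{d}$ factor and the normalization --- to be the only place requiring care, so I would either state the scaling convention explicitly or simply phrase the theorem's content as the bound on $|e^{x}-(1+x)|$ over $x$ with $|x|=\gamma$.

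With this reduction the estimate is immediate. By Taylor's theorem with Lagrange remainder there is $\xi$ between $0$ and $x$ with $e^{x} = 1 + x + \tfrac{1}{2}x^{2}e^{\xi}$, so $\Delta = \tfrac{1}{2}x^{2}e^{\xi} \le \tfrac{1}{2}\gamma^{2}e^{\gamma}$, using $|x| = \gamma$ and $e^{\xi} \le e^{|\xi|} \le e^{\gamma}$. Since $\gamma \in (0,1)$ forces $\gamma^{2} < 1$, we conclude $\Delta < \tfrac{1}{2}e^{\gamma}$, i.e.\ $\Delta = \mathcal{O}(e^{\gamma}/2)$. (Equivalently, summing the tail of the exponential series gives the slightly sharper $\Delta \le e^{\gamma} - 1 - \gamma \le \tfrac{1}{2}e^{\gamma}$, of the same order.) Apart from the bookkeeping in the second step, the argument is just Cauchy--Schwarz together with a one-line remainder bound, so no further obstacles are anticipated.
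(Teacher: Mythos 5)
Your proof is correct and takes essentially the same route as the paper: reduce both kernels to the normalized scalar inner product $x=\langle\boldsymbol{u},\boldsymbol{v}\rangle$, apply the first-order Taylor expansion of $\exp$ with Lagrange remainder, and bound the remainder by $e^{\gamma}/2$ using $|x|\le 1$ from Cauchy--Schwarz. The only cosmetic difference is the role of $\gamma$: the paper takes $\gamma$ to be the interpolation factor in the remainder (writing $\xi=\gamma\,x$ with $\gamma\in(0,1)$), whereas you set $\gamma=|x|$; both readings are consistent with the theorem as stated and yield the same constant bound, and your handling of the $1/\sqrt{d}$ and normalization bookkeeping is, if anything, more explicit than the paper's.
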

We provide the proof in \cref{app:app.theo1}.
\cref{theo:theo.1} demonstrates that the approximation error remains bounded regardless of the size of KGs, which implies \cref{eq.9} is a stable estimation.

\paragraph{Summary.} 
Now, we present the detailed computational process of the proposed attention mechanism, as illustrated in \cref{fig:fig.1}.
Assume $\boldsymbol{X} \in \mathbb{R}^{\lvert \mathcal{V} \rvert \times d}$ is the input feature matrix of entities, $\boldsymbol{R} \in \mathbb{R}^{\lvert \mathcal{R} \rvert \times d}$ is the input features matrix of relations, $f_q(\cdot)$ and $f_v(\cdot)$ are the query and value function, and $\kappa(\cdot,\cdot)$ is the kernel function.
We firstly obtain $\boldsymbol{\widetilde{Z}} = [f_q(\boldsymbol{x}_u)]_{u\in\mathcal{V}}$ and $\boldsymbol{\widehat{Z}} = [f_v(\boldsymbol{x}_u)]_{u\in\mathcal{V}}$ by iteratively applying \cref{eq:eq.6} and \cref{eq:eq.7}.
Then the output feature matrix $\boldsymbol{\overline{Z}}$ can be calculated as follows:
\begin{equation} \label{eq:eq.10}
    \begin{aligned}
        \boldsymbol{Q} = &\frac{\boldsymbol{\widetilde{Z}}\boldsymbol{W}_1}{\Vert \boldsymbol{\widetilde{Z}}\boldsymbol{W}_1 \Vert_{\mathcal{F}}}, \;\;\;\;\;
        \boldsymbol{K} = \frac{\boldsymbol{\widetilde{Z}}\boldsymbol{W}_2}{\Vert \boldsymbol{\widetilde{Z}}\boldsymbol{W}_2 \Vert_{\mathcal{F}}}, 
        \;\;\;\;\; \boldsymbol{V} = \boldsymbol{\widehat{Z}},  \\
        & \boldsymbol{D} = \text{diag} \left(\boldsymbol{1} + \frac{\boldsymbol{Q} \left(\boldsymbol{K}^T \boldsymbol{1} \right) + \lvert \mathcal{V} \rvert}{\lvert \mathcal{V} \rvert}\right), \\
        &\boldsymbol{\overline{Z}} = \boldsymbol{D}^{-1} \left[\boldsymbol{V} + \frac{\boldsymbol{1}^T\boldsymbol{V} + \boldsymbol{Q} \left(\boldsymbol{K}^T \boldsymbol{V} \right)}{\lvert \mathcal{V} \rvert} \right],
    \end{aligned}
\end{equation}
where we omit the bias and $\text{diag}(\cdot)$ means a transformation to diagonal matrix. 
We further show the overall procedure in \cref{alg:alg.1}.
We can observe that the time complexity of \cref{eq:eq.10} is linearly related to $ \lvert \mathcal{V} \rvert $ and $\lvert \mathcal{E} \rvert$, which is scalable to massive KGs.
We will discuss the details of complexity in \cref{sec:sec.4.3}.

\begin{algorithm}[!tb]
   \caption{Attention Computation}
   \label{alg:alg.1}
    \begin{algorithmic}[1]
        \INPUT knowledge graph $\mathcal{G} = \{\mathcal{V}, \mathcal{E}, \mathcal{R}\}$, query $(h,r_q,?)$, entity features $\boldsymbol{X} = [\boldsymbol{x}_u]_{u\in\mathcal{V}}$, relation features $\boldsymbol{R} = [\boldsymbol{r}_p]_{p \in \mathbf{R}}$, and model parameters include $\boldsymbol{\widetilde{W}}, \boldsymbol{\tilde{b}}, \Phi, \boldsymbol{\alpha}, \boldsymbol{\widehat{W}}, \boldsymbol{\hat{b}}, \Psi, \boldsymbol{\beta}, \boldsymbol{W}_1,$ and $\boldsymbol{W}_2$;
        
        \OUTPUT output features $\boldsymbol{\overline{Z}}$;

        \FOR{$l=0$ {\bfseries to} $\widetilde{L}$} 
            \IF{$l = 0$}
                \STATE $\boldsymbol{\tilde{z}}_u^{(0)} \leftarrow [\boldsymbol{x}_u, \epsilon]$;
            \ELSE
                \STATE $\boldsymbol{\tilde{h}}_v \leftarrow \boldsymbol{\tilde{z}}_v^{(l-1)} \odot \left(\boldsymbol{r}_q \cdot \boldsymbol{\widetilde{W}}_{r} + \tilde{b}_r\right)$;
                \STATE $\boldsymbol{\tilde{m}}_{u|v,r}^{(l-1)} \leftarrow \mathop{\arg \max}_{\boldsymbol{w} \in \mathcal{D}} \left\langle \boldsymbol{\tilde{h}}_v, \boldsymbol{w} \right\rangle$;
                \STATE $\boldsymbol{\tilde{z}}_u^{(l)} \leftarrow \Phi^{(l)} \left(\boldsymbol{\alpha}^{(l)} \cdot \boldsymbol{\tilde{z}}_u^{(l-1)} + \sum_{r(v,u)\in\mathcal{E}} \boldsymbol{\tilde{m}}_{u|v,r}^{(l-1)}  \right)$;
            \ENDIF
            \STATE \algorithmiccomment{\textit{\color[HTML]{0C37D8}{Iteratively compute $\boldsymbol{\widetilde{Z}}$ according to \cref{eq:eq.6} taking $\boldsymbol{X}$ and $\boldsymbol{r}_q$ as input}}};
        \ENDFOR

        \FOR{$l=0$ {\bfseries to} $\widehat{L}$}
            \IF{$l = 0$}
                \STATE $\boldsymbol{\hat{z}}_{u}^{(0)} \leftarrow [\boldsymbol{x}_u, \mathbb{I}_{u=h} \odot \boldsymbol{1}]$;
            \ELSE
                \STATE $\boldsymbol{\hat{h}}_v \leftarrow \boldsymbol{\hat{z}}_v^{(l-1)} \odot \left(\boldsymbol{r}_q \cdot \boldsymbol{\widehat{W}}_{r} + \hat{b}_r\right)$;
                \STATE $\boldsymbol{\hat{m}}_{u|v,r}^{(l-1)} \leftarrow \mathop{\arg \max}_{\boldsymbol{w} \in \mathcal{D}} \left\langle \boldsymbol{\hat{h}}_v, \boldsymbol{w} \right\rangle$;
                \STATE $\boldsymbol{\hat{z}}_{u}^{(l)} \leftarrow \Psi^{(l)} \left(\boldsymbol{\beta}^{(l)} \cdot \boldsymbol{\hat{z}}_{u}^{(l-1)} + \sum_{r(v,u)\in\mathcal{E}} \boldsymbol{\hat{m}}_{u|v,r}^{(l-1)}  \right)$;
            \ENDIF
            \STATE \algorithmiccomment{\textit{\color[HTML]{0C37D8}{Iteratively compute $\boldsymbol{\widehat{Z}}$ according to \cref{eq:eq.7} taking $\boldsymbol{X}$, $\boldsymbol{r}_q$, and $h$ as input}}};
        \ENDFOR

        \STATE Obtain $\boldsymbol{Q}$, $\boldsymbol{K}$ and $\boldsymbol{V}$ by $\boldsymbol{Q} \leftarrow \frac{\boldsymbol{\widetilde{Z}}\boldsymbol{W}_1}{\Vert \boldsymbol{\widetilde{Z}}\boldsymbol{W}_1 \Vert_{\mathcal{F}}}$, $\boldsymbol{K} \leftarrow \frac{\boldsymbol{\widetilde{Z}}\boldsymbol{W}_2}{\Vert \boldsymbol{\widetilde{Z}}\boldsymbol{W}_2 \Vert_{\mathcal{F}}}$ and $\boldsymbol{V} \leftarrow \boldsymbol{\widehat{Z}}$;

        \STATE $\boldsymbol{D} \leftarrow \text{diag} \left(\boldsymbol{1} + \frac{\boldsymbol{Q} \left(\boldsymbol{K}^T \boldsymbol{1} \right) + \lvert \mathcal{V} \rvert}{\lvert \mathcal{V} \rvert}\right)$;
        
        \STATE $\boldsymbol{\overline{Z}} \leftarrow \boldsymbol{D}^{-1} \left[\boldsymbol{V} + \frac{\boldsymbol{1}^T\boldsymbol{V} + \boldsymbol{Q} \left(\boldsymbol{K}^T \boldsymbol{V} \right)}{\lvert \mathcal{V} \rvert} \right]$;
        \algorithmiccomment{\textit{\color[HTML]{0C37D8}{kernel function computation}}}
    \end{algorithmic}
\end{algorithm}

\subsection{Overall Architecture of \model}
Based on the attention definition provided in \cref{sec:sec.4.1}, we complete the construction of our proposed model \model.
The remaining components of \model adhere to the standard transformer architecture described in \citet{VaswaniSPUJGKP17}.
Each layer of \model comprises an attention function and a feedforward network. 
In addition, a skip-connection is applied after the attention function, and normalization layers are inserted before and after the feedforward network.
Finally, we stack $L$ layers to construct \model. 

At the start of training, we initialize the entity feature matrix $\boldsymbol{X}$ and relation feature matrix $\boldsymbol{R}$. 
Specifically, $\boldsymbol{X}$ is set to an all-zero vector, while $\boldsymbol{R}$ is randomly initialized and is learnable. 
We then iteratively calculate the entity representations as follows:
\begin{equation}
    \begin{aligned}
        \boldsymbol{A}^{(l)} &= \text{LayerNorm}^{(l)}_1 \left(\boldsymbol{X}^{(l - 1)} + \text{Attn}^{(l)}\left(\boldsymbol{X}^{(l - 1)}, \boldsymbol{R}\right) \right), \\
        \boldsymbol{X}^{(l)} &= \text{LayerNorm}^{(l)}_2 \left( \boldsymbol{A}^{(l)} + \text{FFN}^{(l)}(\boldsymbol{A}^{(l)}) \right). \\
    \end{aligned}
\end{equation}
Lastly, we obtain the final representation of entities $\boldsymbol{X}^{(L)}$.
We update model parameters by optimizing a negative sampling loss~\cite{MikolovSCCD13,SunDNT19} using Adam optimizer~\cite{KingmaB14}.
The loss function is defined for each training fact $(h,r,t)$ as:
\begin{equation}
    \begin{split}
        \mathcal{L} = -\log(\sigma(t|h,r)) 
         -  \sum_{t^\prime} \log \left(  1-\sigma(t^\prime|h,r) \right) ,
    \end{split}
\end{equation}
where $\sigma(\cdot|h,r)$ represents the score of a candidate fact, computed by a multilayer perceptron (MLP) and a sigmoid function using the output feature $\boldsymbol{X}^{(L)}$ and $t^\prime  \in \mathcal{V}\setminus \{t\}$ indicates the negative samples.

\subsection{Discussion} \label{sec:sec.4.3}
In this section, we provide a thorough discussion of the \model, focusing on the analysis of its time complexity and expressivity.

\begin{table*}[htb]
    \centering
    \begin{adjustbox}{width=\textwidth}
    
    \begin{tabular}{lcccccccccccc}
    \toprule
    
    \multicolumn{1}{c}{\multirow{2}[4]{*}{\bf{Method}}} & \multicolumn{3}{c}{\bf{FB15k-237}} & \multicolumn{3}{c}{\bf{WN18RR}} & \multicolumn{3}{c}{\bf{NELL-995}} & \multicolumn{3}{c}{\bf{YAGO3-10}}   \\
    
    \cmidrule(r){2-4} \cmidrule(r){5-7} \cmidrule(r){8-10} \cmidrule(r){11-13} 
    & \multicolumn{1}{c}{MRR} & \multicolumn{1}{c}{H@1} & \multicolumn{1}{c}{H@10} & \multicolumn{1}{c}{MRR} & \multicolumn{1}{c}{H@1} & \multicolumn{1}{c}{H@10} & \multicolumn{1}{c}{MRR} & \multicolumn{1}{c}{H@1} & \multicolumn{1}{c}{H@10} & \multicolumn{1}{c}{MRR} & \multicolumn{1}{c}{H@1} & \multicolumn{1}{c}{H@10}   \\

    \midrule
    
    TransE{\small~\cite{BordesUGWY13}} & 0.330 & 23.2 & 52.6 & 0.222 & 1.4 & 52.8 & 0.507 & 42.4 & 64.8 & 0.510 & 41.3 & 68.1 \\
    DistMult{\small~\cite{YangYHGD14a}}  & 0.358 & 26.4 & 55.0 & 0.455 & 41.0 & 54.4 & 0.510 & 43.8 & 63.6 & 0.566 & 49.1 & 70.4 \\
    RotatE{\small~\cite{SunDNT19}}  & 0.337 & 24.1 & 53.3 & 0.477 & 42.8 & 57.1 & 0.508 & 44.8 & 60.8 & 0.495 & 40.2 & 67.0 \\
    HousE{\small~\cite{Li0LH0LSWDSXZ22}}  & 0.361 & 26.6 & 55.1 & 0.511 & 46.5 & 60.2 & 0.519 & 45.8 & 61.8 & 0.571 & 49.1 & \underline{71.4} \\

    \hdashline

    DRUM{\small~\cite{SadeghianADW19}}  & 0.343 & 25.5 & 51.6 & 0.486 & 42.5 & 58.6 & 0.532 & 46.0 & \underline{66.2} & 0.531 & 45.3 & 67.6   \\
    CompGCN{\small~\cite{VashishthSNT20}} & 0.355 & 26.4 & 53.5 & 0.479 & 44.3 & 54.6 & 0.463 & 38.3 & 59.6 & 0.421 & 39.2 & 57.7 \\
    RNNLogic{\small~\cite{QuCXBT21}}  & 0.344 & 25.2 & 53.0 & 0.483 & 44.6 & 55.8 & 0.516 & 46.3 & 57.8 & 0.554 & 50.9 & 62.2 \\
    NBFNet{\small~\cite{ZhuZXT21}}  & 0.415 & 32.1 & \underline{59.9} & 0.551 & 49.7 & 66.6 & 0.525 & 45.1 & 63.9 & 0.563 & 48.0 & 70.8 \\  
    RED-GNN{\small~\cite{ZhangY22}}  & 0.374 & 28.3 & 55.8 & 0.533 & 48.5 & 62.4 & 0.543 & 47.6 & 65.1 & 0.556 & 48.3 & 68.9 \\
    A*Net{\small~\cite{zhu2023net}}  & 0.411 & 32.1 & 58.6 & 0.549 & 49.5 & 65.9 & 0.521 & 44.7 & 63.1 & 0.556 & 47.0 & 70.7 \\
    AdaProp{\small~\cite{ZhangZY0023}}  & \underline{0.417} & \underline{33.1} & 58.5 & 0.562 & 49.9 & 67.1 & \underline{0.554} & \underline{49.3} & 65.5 & \underline{0.573} & \underline{51.0} & 68.5 \\
    \textsc{Ultra}{\small~\cite{abs-2310-04562}}  & 0.368 & - & 56.4 & 0.480 & - & 61.4 & 0.509 & - & 66.0 & 0.557 & - & 71.0 \\

    \hdashline

    HittER{\small~\cite{ChenLG0ZJ21}} & 0.373 & 27.9 & 55.8 & 0.503 & 46.2 & 58.4 & - & - & - & - & - & - \\
    SimKGC{\small~\cite{0046ZWL22}} & 0.336 & 24.9 & 51.1 & \bf{0.666} & \bf{58.5} & \bf{80.0} & 0.501 & 42.6 & 65.3 & 0.211 & 14.1 & 35.1 \\
    KGT5{\small~\cite{SaxenaKG22}} & 0.276 & 21.0 & 41.4 & 0.508 & 48.7 & 54.4 & - & - & -  & 0.426 & 36.8 & 52.8 \\
    
    \hdashline

    \rowcolor{Tan!20}
    \bf{\model} & \bf{0.430} & \bf{34.3} & \bf{60.8} & \underline{0.579} & \underline{52.8} & \underline{68.7} & \bf{0.566} & \bf{50.2} & \bf{67.5} & \bf{0.615} & \bf{54.7} & \bf{73.4} \\
    
    \bottomrule
    \end{tabular}
    \end{adjustbox}
    
  \caption{Transductive knowledge graph reasoning performance for 4 different datasets. The best results are \textbf{boldfaced} and the second-best results are \underline{underlined}. Our proposed model, \model, achieves SOTA performance in most cases marked by {\colorbox{Tan!20}{\textbf{tan}}}.}
  \label{tab:tab.1}
\end{table*}

\paragraph{Time Complexity.}
Time complexity of our proposed model primarily depends on the attention function. 
We can break down the time complexity into the query function, the value function, and the kernel function. 
The query function is called $L$ times, with each call taking $\mathcal{O}(\widetilde{L}(\lvert \mathcal{E} \rvert d + \lvert \mathcal{V} \rvert d^2))$. 
Similarly, the value function is called $L$ times, with a single call taking $\mathcal{O}(\widehat{L}(\lvert \mathcal{E} \rvert d + \lvert \mathcal{V} \rvert d^2))$. 
Finally, the kernel function is called $L$ times, with each call taking $\mathcal{O}(\lvert \mathcal{V} \rvert d^2)$. 
Therefore, the total complexity amounts to $\mathcal{O}(L(\widetilde{L} + \widehat{L})(\lvert \mathcal{E} \rvert d + \lvert \mathcal{V} \rvert d^2))$. 
The above conclusion shows that the time complexity of our proposed method is linearly related to the number of facts and entities in the knowledge graph, exhibiting better scalability.

\paragraph{Expressivity Analysis.}
\citet{huang2023theory} introduced a variant of the Weisfeiler-Leman Test~\cite{weisfeiler1968reduction,XuHLJ19,Barcelo00O22} called the \textit{Relational Asymmetric Local 2-WL Test} (\texttt{RA-WL}$_2$) to evaluate the expressive power of message-passing networks in the knowledge graph reasoning task. 
We formally demonstrate the expressivity of \model based on \texttt{RA-WL}$_2$. 
This is stated in the following theorem:
\begin{restatable}[Expressivity of \model]{theorem}{theoe} 
\label{theo:theo.2}
    Assuming the estimated graph by the attention layer of \model is $\mathcal{\tilde{E}}$, the attention layer of \model can achieve at least the same level of expressive ability as \texttt{RA-WL}$_2$ on the extended graph $\mathcal{E} \cup \mathcal{\tilde{E}}$.
\end{restatable}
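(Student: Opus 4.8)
The plan is to reduce the claim to the expressivity characterization of conditional relational message-passing networks due to \citet{huang2023theory}, with the one extra ingredient that the attention aggregation augments the base graph $\mathcal{E}$ by exactly the estimated edges $\mathcal{\tilde{E}}$. First I would fix a head $h$ and query relation $r_q$ and recall that \texttt{RA-WL}$_2$ assigns to every entity $v$ a color $\chi^{(t)}(v\mid h,r_q)$, initialized so that $h$ is distinguished from every other entity, and refined by $\chi^{(t+1)}(v\mid h,r_q)=\mathrm{HASH}\bigl(\chi^{(t)}(v\mid h,r_q),\,\ldblbrace(\chi^{(t)}(w\mid h,r_q),r):r(w,v)\in\mathcal{E}\cup\mathcal{\tilde{E}}\rdblbrace\bigr)$. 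The goal is then to exhibit parameters of one attention layer of \model under which entities receiving distinct \texttt{RA-WL}$_2$ colors on $\mathcal{E}\cup\mathcal{\tilde{E}}$ receive distinct output representations $\boldsymbol{\overline{z}}_v$; this is precisely the meaning of ``at least as expressive as \texttt{RA-WL}$_2$.''

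Second, I would show that V-RMPNN (\cref{eq:eq.7}) is an instance of a conditional relational MPNN in the sense of \citet{huang2023theory}: its initialization $\boldsymbol{z}^{(0)}_{u\mid h,r_q}=[\boldsymbol{x}_u,\mathbb{I}_{u=h}\odot\boldsymbol{1}]$ is target-node distinguishing, its message $\boldsymbol{m}^{(l-1)}_{u\mid v,r}=\boldsymbol{z}^{(l-1)}_{v\mid h,r_q}\odot\boldsymbol{\hat{r}}$ is relation- and query-specific, and its update $\Psi^{(l)}$ is a summation followed by an MLP. Using the universal approximation of injective functions on countable sets (the argument underlying GIN and its relational analogue, cf.\ \citet{XuHLJ19,Barcelo00O22}), one can pick the relation projections $\boldsymbol{W}_r,\boldsymbol{b}_r$, the scalars $\boldsymbol{\beta}^{(l)}$, and the MLPs $\Psi^{(l)}$ so that each V-RMPNN layer realizes one \texttt{RA-WL}$_2$ refinement step restricted to $\mathcal{E}$; and one can likewise arrange the query representations $\boldsymbol{\widetilde{z}}_u$ so that, after the normalized kernel $\kappa$ (\cref{eq:eq.10}), the attention weights on $v\in\mathcal{N}_{\mathcal{\tilde{E}}}(u)$ are bounded away from zero, so that the weighted sum $\sum_v\mathrm{attn}(u,v)\,\boldsymbol{\hat{z}}_v$ encodes the multiset $\ldblbrace\boldsymbol{\hat{z}}_v:v\in\mathcal{N}_{\mathcal{\tilde{E}}}(u)\rdblbrace$. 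Interleaving the V-RMPNN layers with this attention aggregation, and using that the skip/concatenation term $[\boldsymbol{V},\cdots]$ in \cref{eq:eq.10} retains the current color while appending the aggregated neighbour colors, a straightforward induction on $t$ gives: if \texttt{RA-WL}$_2$ on $\mathcal{E}\cup\mathcal{\tilde{E}}$ separates $u$ and $u'$ at round $t$, then after enough layers the attention layer of \model outputs $\boldsymbol{\overline{z}}_u\ne\boldsymbol{\overline{z}}_{u'}$, which proves \cref{theo:theo.2}.

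The main obstacle I anticipate is making the attention produce an \emph{injective} multiset aggregation: unlike a plain sum, the attention is a data-dependent \emph{weighted} sum, and such sums are not automatically injective multiset functions. I would address this by (i) defining $\mathcal{\tilde{E}}$ precisely as the support of the kernel, i.e.\ the pairs $(u,v)$ with $\kappa(\boldsymbol{\widetilde{z}}_u,\boldsymbol{\widetilde{z}}_v)>0$ (equivalently, with non-trivial normalized weight), so the aggregation is genuinely over the $\mathcal{\tilde{E}}$-neighbourhood; and (ii) absorbing the weights into the injectivity argument by a genericity statement: for Lebesgue-almost-every choice of $\boldsymbol{W}_1,\boldsymbol{W}_2$, the induced weights make the linear map $\boldsymbol{\mu}\mapsto\sum_v\mathrm{attn}(u,v)\,\boldsymbol{\mu}_v$ separate the finitely many distinct neighbour-colour multisets that can occur on a fixed finite KG, after which the outer update MLP restores a canonical injective encoding. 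A cleaner alternative is to first prove the bound treating the thresholded kernel as a $0/1$ adjacency of $\mathcal{\tilde{E}}$ and then note that Frobenius normalization is an injective per-entity reparametrization that does not affect which colours are distinguished. The remaining steps — checking that the first-order Taylor kernel of \cref{eq.9} is still a valid separating bilinear form, and that \cref{theo:theo.1} guarantees this introduces no collapse of representations — are routine.
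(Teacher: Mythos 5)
Your proposal follows essentially the same route as the paper's proof: it reduces the structure-aware modules (Q-/V-RMPNN) to the conditional message-passing framework of \citet{huang2023theory}, whose characterization yields \texttt{RA-WL}$_2$ expressivity on $\mathcal{E}$ given a target-node-distinguishing initialization, and then argues that the attention aggregation over the estimated graph $\mathcal{\tilde{E}}$ is an injective function of the pair (own representation, multiset of neighbor values) via the GIN-style argument of \citet{XuHLJ19} with the FFN realizing the required encoding by universal approximation. The only differences are of emphasis rather than strategy: you explicitly confront the data-dependent weighted-sum issue in the attention (proposing a genericity/thresholding fix), which the paper handles by a direct appeal to Lemma 5 and Corollary 6 of \citet{XuHLJ19} on the weighted values, while the paper additionally verifies target-node distinguishability of the query function's Gaussian-noise initialization, a point you sidestep by relying on the head-labeling initialization of V-RMPNN.
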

We provide proof in \cref{app:app.theo2}. 
The estimated graph $\mathcal{\tilde{E}}$ can be viewed as a collection of facts of special relations determining the equivalence between entity pairs for a specific query.
\model exhibits stronger expressive power due to its ability to propagate information on an extended graph, distinguishing it from vanilla path-based methods.

\section{Experimental Results}
In this section, we conduct empirical studies motivated by the following aspects:
\begin{itemize}[itemsep=3pt,topsep=0pt,parsep=1pt]
    \item \textbf{Transductive Performance.} As a general knowledge graph reasoning task, we aim to demonstrate the performance of \model in transductive knowledge graph reasoning tasks.
    \item \textbf{Inductive Performance.} Our proposed model supports reasoning in inductive settings. How does \model perform in inductive knowledge graph reasoning tasks?
    \item \textbf{Ablation Study.} How important are the various components within our proposed framework? For instance, how does model performance change if we omit the query function?
    \item \textbf{Further Experiments.} We conduct additional experiments to further showcase the effectiveness of \model. For example, does \model perform better for longer paths?
\end{itemize}

Our code is available at \url{https://github.com/jnanliu/KnowFormer}.

\begin{table*}[htb]
    \centering

    \begin{adjustbox}{width=\textwidth}
    
    \begin{tabular}{lcccccccccccc}
    \toprule
    
    \multicolumn{1}{c}{\multirow{2}[4]{*}{\bf{Method}}} & \multicolumn{3}{c}{\bf{v1}} & \multicolumn{3}{c}{\bf{v2}} & \multicolumn{3}{c}{\bf{v3}} & \multicolumn{3}{c}{\bf{v4}}   \\
    
    \cmidrule(r){2-4} \cmidrule(r){5-7} \cmidrule(r){8-10} \cmidrule(r){11-13}  
    & \multicolumn{1}{c}{MRR} & \multicolumn{1}{c}{H@1} & \multicolumn{1}{c}{H@10} & \multicolumn{1}{c}{MRR} & \multicolumn{1}{c}{H@1} & \multicolumn{1}{c}{H@10} & \multicolumn{1}{c}{MRR} & \multicolumn{1}{c}{H@1} & \multicolumn{1}{c}{H@10} & \multicolumn{1}{c}{MRR} & \multicolumn{1}{c}{H@1} & \multicolumn{1}{c}{H@10}   \\

    \midrule

    \multicolumn{13}{c}{\bf{FB15k-237}} \\
    \midrule
    
    DRUM{\small~\cite{SadeghianADW19}}  & 0.333 & 24.7 & 47.4 & 0.395 & 28.4 & 59.5 & 0.402 & 30.8 & 57.1 & 0.410 & 30.9 & 59.3 \\
    NBFNet{\small~\cite{ZhuZXT21}} & 0.442 & 33.5 & 57.4 & \underline{0.514} & \underline{42.1} & \underline{68.5} & \underline{0.476} & 38.4 & \underline{63.7} & 0.453 & 36.0 & 62.7 \\
    RED-GNN{\small~\cite{ZhangY22}} & 0.369 & 30.2 & 48.3 & 0.469 & 38.1 & 62.9 & 0.445 & 35.1 & 50.3 & 0.442 & 34.0 & 62.1 \\
    A*Net{\small~\cite{zhu2023net}} & \underline{0.457} & \bf{38.1} & \underline{58.9} & 0.510 & 41.9 & 67.2 & \underline{0.476} & \underline{38.9} & 62.9 & \underline{0.466} & \underline{36.5} & \underline{64.5} \\
    AdaProp{\small~\cite{ZhangZY0023}} & 0.310 & 19.1 & 55.1 & 0.471 & 37.2 & 65.9 & 0.471 & 37.7 & \underline{63.7} & 0.454 & 35.3 & 63.8 \\
    \textsc{InGram}{\small~\cite{LeeCW23}} & 0.293 & 16.7 & 49.3 & 0.274 & 16.3 & 48.2 & 0.233 & 14.0 & 40.8 & 0.214 & 11.4 & 39.7 \\

    \hdashline
    \rowcolor{Tan!20}
    \bf{\model} & \bf{0.466} & \underline{37.8} & \bf{60.6} & \bf{0.532} & \bf{43.3} & \bf{70.3} & \bf{0.494} & \bf{40.0} & \bf{65.9} & \bf{0.480} & \bf{38.3} & \bf{65.3} \\

    \midrule
    \multicolumn{13}{c}{\bf{WN18RR}} \\
    \midrule

    DRUM{\small~\cite{SadeghianADW19}} & 0.666 & 61.3 & 77.7 & 0.646 & 59.5 & 74.7 & 0.380 & 33.0 & 47.7 & 0.627 & 58.6 & 70.2 \\
    NBFNet{\small~\cite{ZhuZXT21}} & \underline{0.741} & \underline{69.5} & \bf{82.6} & 0.704 & 65.1 & 79.8 & 0.452 & 39.2 & 56.8 & 0.641 & 60.8 & 69.4 \\  
    RED-GNN{\small~\cite{ZhangY22}} & 0.701 & 65.3 & 79.9 & 0.690 & 63.3 & 78.0 & 0.427 & 36.8 & 52.4 & 0.651 & 60.6 & 72.1 \\
    A*Net{\small~\cite{zhu2023net}} & 0.727 & 68.2 & 81.0 & 0.704 & \underline{64.9} & 80.3 & 0.441 & 38.6 & 54.4 & \underline{0.661} & \bf{61.6} & \underline{74.3} \\
    AdaProp{\small~\cite{ZhangZY0023}} & 0.733 & 66.8 & 80.6 & \bf{0.715} & 64.2 & \bf{82.6} & \bf{0.474} & \underline{39.6} & \bf{58.8} & \bf{0.662} & \underline{61.1} & \bf{75.5} \\
    \textsc{InGram}{\small~\cite{LeeCW23}} & 0.277 & 13.0 & 60.6 & 0.236 & 11.2 & 48.0 & 0.230 & 11.6 & 46.6 & 0.118 & 4.1 & 25.9 \\
    SimKGC{\small~\cite{0046ZWL22}} & 0.315 & 19.2 & 56.7 & 0.378 & 23.9 & 65.0 & 0.303 & 18.6 & 54.3 & 0.308 & 17.5 & 57.7 \\

    \hdashline
    \rowcolor{Tan!20}
    \bf{\model} & \bf{0.752} & \bf{71.5} & \underline{81.9} & \underline{0.709} & \bf{65.6} & \underline{81.7} & \underline{0.467} & \bf{40.6} & \underline{57.1} & 0.646 & 60.9 & 72.7 \\

    \midrule
    \multicolumn{13}{c}{\bf{NELL-995}} \\
    \midrule

    NBFNet{\small~\cite{ZhuZXT21}} & 0.584 & 50.0 & 79.5 & 0.410 & 27.1 & 63.5 & 0.425 & 26.2 & 60.6 & 0.287 & 25.3 & 59.1 \\  
    RED-GNN{\small~\cite{ZhangY22}} & 0.637 & 52.2 & 86.6 & 0.419 & 31.9 & 60.1 & \underline{0.436} & \underline{34.5} & 59.4 & 0.363 & \underline{25.9} & \bf{60.7} \\
    AdaProp{\small~\cite{ZhangZY0023}} & 0.644 & 52.2 & \underline{88.6} & \underline{0.452} & \underline{34.4} & \underline{65.2} & 0.435 & 33.7 & \underline{61.8} & \underline{0.366} & 24.7 & \bf{60.7} \\
    \textsc{InGram}{\small~\cite{LeeCW23}} & \underline{0.697} & \underline{57.5} & 86.5 & 0.358 & 25.3 & 59.6 & 0.308 & 19.9 & 50.9 & 0.221 & 12.4 & 44.0 \\

    \hdashline
    \rowcolor{Tan!20}
    \bf{\model} & \bf{0.827} & \bf{77.0} & \bf{93.0} & \bf{0.465} & \bf{35.7} & \bf{65.7} & \bf{0.478} & \bf{37.8} & \bf{65.7} & \bf{0.378} & \bf{26.7} & \underline{59.8} \\
    
    \bottomrule
    \end{tabular}
    \end{adjustbox}
    
  \caption{Inductive knowledge graph reasoning performance for 3 different datasets and 12 different versions. For each version, the best results are \textbf{boldfaced} and the second-best results are \underline{underlined}. Our proposed model, \model is marked by {\colorbox{Tan!20}{\textbf{tan}}}.}
  \label{tab:tab.2}
\end{table*}

\subsection{Transductive Performance} 
\paragraph{Datasets.} 
We conduct experiments on four widely utilized transductive knowledge graph reasoning datasets: FB15k-237~\cite{ToutanovaC15}, WN18RR~\cite{DettmersMS018}, NELL-995~\cite{XiongHW17} and YAGO3-10~\cite{MahdisoltaniBS15}.
For consistency, we utilize the same data splits as in prior studies~\cite{ZhuZXT21,ZhangZY0023}.

\vspace{-0.2cm}
\paragraph{Baselines.}
We compare \model to several prominent baselines, categorized into three classes: 
\begin{itemize}[itemsep=3pt,topsep=0pt,parsep=1pt]
\item \textit{Embedding-based methods}, including TransE~\cite{BordesUGWY13}, DistMult~\cite{YangYHGD14a}, RotatE~\cite{SunDNT19}, and HousE~\cite{Li0LH0LSWDSXZ22}. These methods learn embeddings for entities and relations and perform reasoning based on distance or similarity. 
\item \textit{Path-based methods}, including DRUM~\cite{SadeghianADW19}, CompGCN~\cite{VashishthSNT20}, RNNLogic~\cite{QuCXBT21}, NBFNet~\cite{ZhuZXT21}, RED-GNN~\cite{ZhangY22}, A*Net~\cite{zhu2023net}, AdaProp~\cite{ZhangZY0023}, and \textsc{Ultra}~\cite{abs-2310-04562}. These methods conduct reasoning by utilizing the path information connecting entities. 
\item \textit{Text-based methods}, including HittER~\cite{ChenLG0ZJ21}, SimKGC~\cite{0046ZWL22}, and KGT5~\cite{SaxenaKG22}. These methods utilize textual information for reasoning in knowledge graphs.
\end{itemize}

\paragraph{Results and Analysis.}
We evaluate the performance of reasoning using standard metrics~\cite{WangMWG17,JiPCMY22}, namely MRR ($\uparrow$), Hit@1 ($\uparrow$), and Hit@10 ($\uparrow$). 
The results on various datasets are presented in \cref{tab:tab.1}.
Across all metrics, \model demonstrates a substantial performance advantage over the best baseline method in FB15k-237, NELL-995, and YAGO3-10. 
Particularly in the large-scale YAGO3-10 dataset, \model exhibits a significant performance advantage over the best baseline methods. 
This highlights the high effectiveness of \model in transductive knowledge graph reasoning.
In the WN18RR dataset, \model achieves the second-best performance among all baselines.
Notably, SimKGC~\cite{0046ZWL22} performs exceptionally well in this dataset, which can be attributed to its ability to acquire knowledge from extensive pretrained data, allowing it to capture the semantic relationships in WN18RR as a subset of a comprehensive English lexical database.
Conversely, methods like SimKGC that rely on pretrained language models tend to underperform in datasets that require domain-specific knowledge. 
In contrast, our proposed text-free method relies solely on the structure of the knowledge graphs, resulting in enhanced robustness and scalability.

\subsection{Inductive Performance} 
Inductive reasoning has become a prominent research topic~\cite{HamiltonYL17,TeruDH20} due to the ubiquitous occurrence of emergent entities in real-world applications~\cite{ZhangC20}.
In this part, we will show the performance of \model on the inductive knowledge graph reasoning task.
Note that in this paper, we focus on the \textit{semi-inductive} task, which has an invariant relation set.
However, we believe that our method can be adapted to handle full-inductive tasks with slight modifications. 
We leave this aspect as future work to address.

\paragraph{Datasets.} 
In line with \citet{TeruDH20}, we employ the same data divisions of FB15k-237~\cite{ToutanovaC15}, WN18RR~\cite{DettmersMS018}, and NELL-995~\cite{XiongHW17}.
Each of these divisions comprises 4 versions, resulting in a total of 12 subsets. 
It is worth noting that each subset has a unique separation between the training and test sets.
Specifically, the training and test sets within each subset have unique sets of entities while sharing the same set of relations.

\paragraph{Baselines.}
We compare \model with 7 baseline methods for inductive knowledge graph reasoning, including DRUM~\cite{SadeghianADW19}, NBFNet~\cite{ZhuZXT21}, RED-GNN~\cite{ZhangY22}, A*Net~\cite{zhu2023net}, AdaProp~\cite{ZhangZY0023}, SimKGC~\cite{0046ZWL22}, and \textsc{InGram}~\cite{LeeCW23}.
Note that \textsc{Ultra}~\cite{abs-2310-04562} is not included as a baseline model, since we consider \textsc{Ultra} to be a distinctive method based on the pre-training and fine-tuning paradigm, which can benefit from a large-scale amount of training data.
However, our method still achieves comparable results to \textsc{Ultra} on some datasets. 
Please refer to the experimental results in \cref{app:app.aer.ultra} for more details.

\paragraph{Results and Analysis.}
We also evaluate the performance by standard metrics~\cite{WangMWG17,JiPCMY22}, including MRR ($\uparrow$), Hit@1 ($\uparrow$), and Hit@10 ($\uparrow$). 
\cref{tab:tab.2} showcases the inductive performance of \model and baselines.
We have observed that \model consistently achieves the highest performance on the majority of metrics across all versions of inductive datasets. 
Furthermore, our method also produces competitive results for the remaining metrics. 
Compared to transductive settings, \model demonstrates a relatively small performance gap. 
This can be attributed to the limited size of inductive datasets. 
Path-based methods can attain superior performance by utilizing shorter paths and avoiding excessive compression of information.
However, \model maintains a performance advantage, indicating its effectiveness. 
Another noteworthy observation is the inadequate performance of SimKGC~\cite{0046ZWL22} in inductive reasoning. 
This suggests a potential necessity for text-based and pretrained language model methods to have access to larger amounts of training data to converge and attain improved results.

\begin{table}[tb]
    \centering
    \small

     \begin{adjustbox}{width=.45\textwidth}
    
    \begin{tabular}{lccc}
    \toprule
    
    {\bf Method} & \multicolumn{1}{c}{{\bf MRR}} & \multicolumn{1}{c}{{\bf H@1}} & \multicolumn{1}{c}{{\bf H@10}}  \\

    \midrule
    \multicolumn{4}{c}{FB15k-237} \\
    \midrule

    \rowcolor{Tan!20}
    \bf{\model} & \bf{0.430} & \bf{34.3} & \bf{60.8} \\

    \;\;\textit{w/o} attention & 0.417 & 32.8 & 58.8 \\
    \;\;\textit{w/o} query function & 0.422 & 33.0 & 59.2 \\
    \;\;\textit{w/o} value function  & 0.367 & 30.7 & 48.7 \\
    \;\;RF-based kernel function  & 0.419 & 32.9 & 57.6 \\

    \midrule
    \multicolumn{4}{c}{UMLS} \\
    \midrule

    \rowcolor{Tan!20}
    \bf{\model} & 0.971 & \bf{95.8} & \bf{99.8} \\
    \;\;full-exponential kernel function & \bf{0.974} & 95.2 & 99.5 \\
    
    \bottomrule
    \end{tabular}

    \end{adjustbox}
    
  \caption{Ablation study of \model and its variants on transductive FB15k-237 dataset and UMLS dataset. We mark the vanilla \model by {\colorbox{Tan!20}{\textbf{tan}}}. The best performance is \textbf{boldfaced}.}
  \label{tab:tab.3}
  \vspace{-1.5em}
\end{table}

\subsection{Ablation Study}
In this part, we aim to evaluate the effectiveness of the proposed attention mechanism in \model. 
We conduct comparisons against several variations, including: (1) removing attention, which results in \model degenerating into vanilla path-based methods such as NBFNet~\cite{ZhuZXT21}; (2) excluding the query function; (3) excluding the value function; (4) substituting the kernel function with an approximation method based on random features~\cite{SinhaD16,LiuHCS22,WuZLWY22}; (5) substituting the kernel function with full-exponential kernel function.

The results of the ablation study on FB15k-237 are presented in \cref{tab:tab.3}.
We can initially observe a decline in model performance when attention is removed, underscoring the importance of the proposed attention mechanism. 
The performance decrease caused by omitting the query function is relatively limited, suggesting that the input of the attention layer retains some structural information after the update of previous layers.
Furthermore, omitting the value function leads to a significant decrease in model performance, as the explicit integration of structural information is crucial for reasoning tasks in knowledge graphs. 
In contrast, the pure transformer model faces difficulties in achieving this, highlighting the necessity of structural-aware modules. 
Lastly, a comparison with RF-based methods demonstrates the effectiveness of our kernel function.

To compare the performance between our proposed approximation kernel and the original full-exponential kernel, we conduct an additional ablation study on a small dataset UMLS~\cite{KokD07} in \cref{tab:app.tab.3} due to the computation overhead.
The minimal variation in the performance of both variants indicates the effectiveness of the approximate kernel. 
The full exponential kernel has a slight performance advantage, yet the quadratic computational complexity it brings is unacceptable in knowledge graph reasoning.

\begin{figure}[tb]
    \centering
    \includegraphics[scale=0.4]{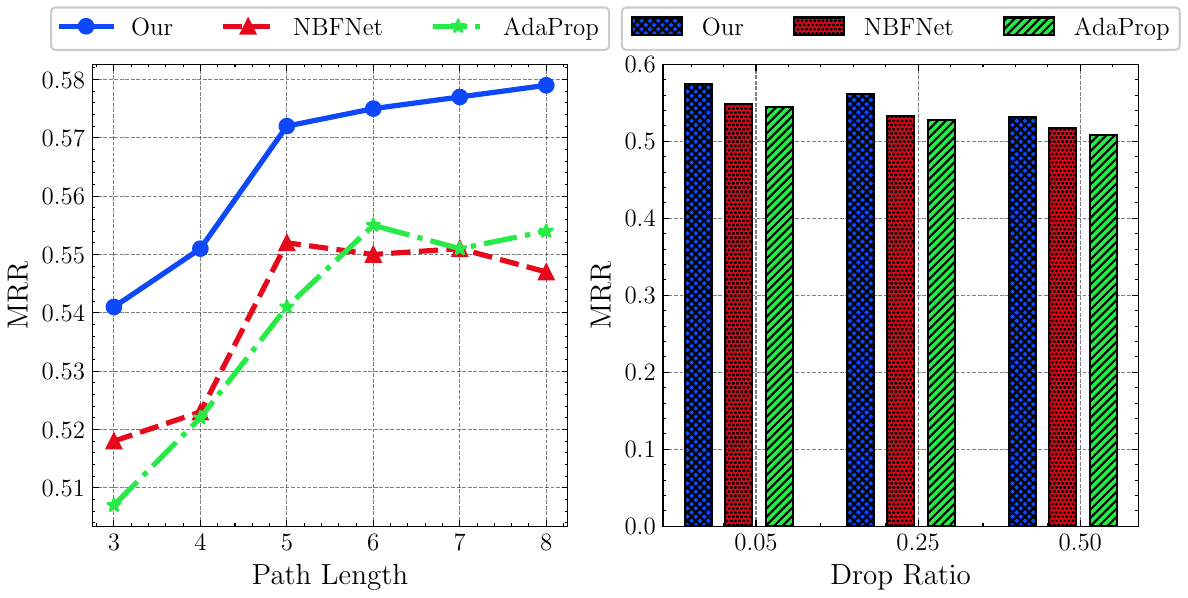}
    \caption{Experimental results on WN18RR. In the left chart, we evaluate the performance of all methods under different lengths of reasoning paths. In the right chart, we randomly drop some facts from the test data and report the performance of all methods.}
    \label{fig:fig.2}
    \vspace{-1.5em}
\end{figure}

\subsection{Further Analysis}
In this section, we present additional experiments to demonstrate the superiority of \model over vanilla path-based methods~\cite{ZhuZXT21,ZhangY22}. 
Specifically, we concentrate on two folds: (1) comparing methods in scenarios with gradually growing reasoning paths, and (2) comparing methods in scenarios with missing reasoning paths, where we randomly drop paths between pairs of entities in the test data. The experimental results on the WN18RR dataset are illustrated in \cref{fig:fig.2}.

\paragraph{Performance w.r.t. Longer Paths.}
The individual performance of \model remained unaffected even as the reasoning path extended, validating its capacity to mitigate information over-squashing resulting from prolonged entity interactions. 
This finding is consistent with the results in \citet{0002Y21}.

\paragraph{Performance w.r.t. Missing Paths.}
As the proportion of missing paths increases, \model exhibits stronger robustness compared to the baseline methods, which show a noticeable performance decline. 
This demonstrates that the presence of all-pair interactions in \model enables it to better handle missing paths.

\section{Conclusion}
This paper proposes a novel transformer-based method \model for knowledge graph reasoning.
\model consists of an expressive and scalable attention mechanism.
Specifically, we introduce message-passing neural network based query function and value function to obtain informative key and value representations.
Additionally, we present an efficient attention computation method to enhance the scalability of \model.
Experimental results show that \model outperforms salient baselines on both transductive and inductive benchmarks.

\section*{Acknowledgements}
We appreciate all anonymous reviewers for their careful review and valuable comments.
The authors of this paper are supported by the National Natural Science Foundation of China through grant No.62225202.

\section*{Impact Statement}
This paper introduces \model, a novel method for knowledge graph reasoning.
Knowledge graph reasoning is widely applied in various fields, including personalized recommendations, drug discovery, and enterprise-level knowledge management decisions. 
Each of these applications has the potential to generate significant societal impacts. 
Personalized recommendations, for instance, can enhance user experience, but also raise concerns about privacy breaches. 
For example, in a social platform, users can provide personalized friend recommendations based on their personal information, but this information is also easy to cause privacy disclosure.
Similarly, drug discovery contributes to advancements in the field of biomedicine, but it also introduces risks associated with medical accidents.
We encourage researchers to further investigate and address concerns related to privacy risks and errors that may arise in knowledge graph reasoning.


\bibliography{ref}
\bibliographystyle{icml}

\newpage
\appendix
\onecolumn
\section{More Preliminaries \& Backgrounds}
\paragraph{Transformer Architecture.} \label{sec:sec.3.3}
The Transformer architecture is originally introduced in ~\citet{VaswaniSPUJGKP17}.
A vanilla transformer block consists of two main modules: a self-attention module followed by a feed-forward neural network.
In the self-attention module, the input feature matrix denoted as $\boldsymbol{X} \in \mathbb{R}^{n\times d}$ are projected to query $\boldsymbol{Q}$, key $\boldsymbol{K}$ and value $\boldsymbol{V}$ using linear projection matrices $\boldsymbol{W}_q \in \mathbb{R}^{d^\prime \times n}$, $\boldsymbol{W}_k \in \mathbb{R}^{d^\prime \times n}$ and $\boldsymbol{W}_v \in \mathbb{R}^{d^\prime \times n}$.
This is done by $\boldsymbol{Q} = \boldsymbol{X}\boldsymbol{W}_{query}$, $\boldsymbol{K} = \boldsymbol{X}\boldsymbol{W}_{key}$ and $\boldsymbol{V} = \boldsymbol{X}\boldsymbol{W}_{value}$, where the bias is omited.
The self-attention computation is then given by:
\begin{equation} \label{eq.1}
    \text{Self-Attention}(\boldsymbol{X}) = \text{Softmax}\left(\frac{\boldsymbol{Q}\boldsymbol{K}^T}{\sqrt{d^\prime}}\right) \boldsymbol{V}.
\end{equation}
In practice, it is common to utilize \textit{multi-head} attention, where multiple instances of \cref{eq.1} are concatenated. This approach has demonstrated effectiveness in capturing multi-view interactions.
Subsequently, the output of the self-attention is combined with layer normalization~\cite{BaKH16} and a feedforward network (FFN) to form a transformer block.

\paragraph{Graph Transformers.}
Recently, transformer models~\cite{VaswaniSPUJGKP17} have gained popularity in graph learning due to their ability to capture complex relationships beyond those captured by regular graph neural networks (GNNs) differently.
Injecting structural bias into the original attention mechanism is a key problem in graph transformers. 
Early work by \citet{abs-2012-09699} used Laplacian eigenvectors as positional encodings, and since then, several extensions and other models have been proposed~\cite{abs-2202-08455,KimNMCLLH22,Ma0LRDCTL23}.
\citet{WuJWMGS21} propose a hybrid architecture that uses a stack of message-passing GNN layers followed by regular transformer layers.
The most relevant work to this paper is SAT~\cite{ChenOB22}, which reformulates the self-attention mechanism as a smooth kernel and incorporates structural information by extracting a subgraph representation rooted at each node before attention computation. Computation during the process follows the equations displayed below:
\begin{equation}
    \text{Self-Attention}(\boldsymbol{x}_u) = \sum_{v\in \mathcal{V}} \frac{\kappa(\boldsymbol{x}_u, \boldsymbol{x}_v)}{\sum_{w\in \mathcal{V}}\kappa(\boldsymbol{x}_u, \boldsymbol{x}_w)} f(\boldsymbol{x}_v), \forall u \in \mathcal{V},
\end{equation}
where $f(\boldsymbol{x})=\boldsymbol{x}\boldsymbol{W}_{value} + b_{value}$, and the non-symmetric kernel $\kappa$ is defined as:
\begin{equation}
    \kappa(\boldsymbol{x}_u, \boldsymbol{x}_v) = \exp \left( \frac{\left\langle \textrm{GNN}(\boldsymbol{x}_u) \boldsymbol{W}_{query} + b_{query}, \textrm{GNN}(\boldsymbol{x}_v)\boldsymbol{W}_{key} + b_{key} \right\rangle}{\sqrt{d}} \right).
\end{equation}
Moreover, other works focus on the development of scalable models, such as NodeFormer~\cite{WuZLWY22}, GraphGPS~\cite{RampasekGDLWB22}, DIFFormer~\cite{WuYZHWY23}, and SGFormer~\cite{wu2023simplifying}.

\section{Proof of \cref{theo:theo.1}} \label{app:app.theo1}

\begin{proof} 
Recall that we have the following equations:
\begin{equation}
    \begin{aligned}
        &\kappa(\boldsymbol{\widetilde{z}}_u,\boldsymbol{\widetilde{z}}_v) = 1 +  \left\langle \frac{\boldsymbol{\widetilde{z}}_u\boldsymbol{W}_1}{\Vert \boldsymbol{\widetilde{z}}_u\boldsymbol{W}_1 \Vert_{\mathcal{F}}}, \frac{\boldsymbol{\widetilde{z}}_v\boldsymbol{W}_2}{\Vert \boldsymbol{\widetilde{z}}_v\boldsymbol{W}_2 \Vert_{\mathcal{F}} } \right\rangle, \\
        &\kappa_{\text{exp}}(\boldsymbol{\widetilde{z}}_u,\boldsymbol{\widetilde{z}}_v) = \exp \left( \left\langle \frac{\boldsymbol{\widetilde{z}}_u\boldsymbol{W}_1}{\Vert \boldsymbol{\widetilde{z}}_u\boldsymbol{W}_1 \Vert_{\mathcal{F}}}, \frac{\boldsymbol{\widetilde{z}}_v\boldsymbol{W}_2}{\Vert \boldsymbol{\widetilde{z}}_v\boldsymbol{W}_2 \Vert_{\mathcal{F}} } \right\rangle \right),
    \end{aligned}
\end{equation}
where we omit the bias. For convenience, we define $\boldsymbol{u} = \frac{\boldsymbol{\widetilde{z}}_u\boldsymbol{W}_1}{\Vert \boldsymbol{\widetilde{z}}_u\boldsymbol{W}_1 \Vert_{\mathcal{F}}}$ and $\boldsymbol{v} = \frac{\boldsymbol{\widetilde{z}}_v\boldsymbol{W}_2}{\Vert \boldsymbol{\widetilde{z}}_v\boldsymbol{W}_2 \Vert_{\mathcal{F}}}$, yielding:
\begin{equation}
    \begin{aligned}
        &\kappa(\boldsymbol{\widetilde{z}}_u,\boldsymbol{\widetilde{z}}_v) = 1 +  \left\langle \boldsymbol{u},\boldsymbol{v} \right\rangle, \\
        &\kappa_{\text{exp}}(\boldsymbol{\widetilde{z}}_u,\boldsymbol{\widetilde{z}}_v) = \exp \left( \left\langle \boldsymbol{u},\boldsymbol{v} \right\rangle \right),
    \end{aligned}
\end{equation}
where $\langle \boldsymbol{u},\boldsymbol{v} \rangle \in [-1, 1]$. According to the Taylor formula, the expression for $\kappa_{\text{exp}}(\boldsymbol{\widetilde{z}}_u,\boldsymbol{\widetilde{z}}_v)$ is as follows:
\begin{equation} \label{eq:eq.lagr}
    \exp \left( \left\langle \boldsymbol{u},\boldsymbol{v} \right\rangle \right) = \underbrace{1 + \left\langle \boldsymbol{u},\boldsymbol{v} \right\rangle}_{\text{first-order Taylor expansion}} + \underbrace{\frac{\exp (\xi)}{2} (\left\langle \boldsymbol{u},\boldsymbol{v} \right\rangle)^2}_{\text{Lagrange remainder term}},
\end{equation}
where $\xi \in (0, \left\langle \boldsymbol{u},\boldsymbol{v} \right\rangle)$. 
In other words, we have:
\begin{equation}
    \left\lvert \kappa(\boldsymbol{\widetilde{z}}_u,\boldsymbol{\widetilde{z}}_v) -  \kappa_{\text{exp}}(\boldsymbol{\widetilde{z}}_u,\boldsymbol{\widetilde{z}}_v) 
  \right\rvert = \frac{\exp (\xi)}{2} (\left\langle \boldsymbol{u},\boldsymbol{v} \right\rangle)^2.
\end{equation}
Further, by taking $\xi = \gamma \cdot \left\langle \boldsymbol{u},\boldsymbol{v} \right\rangle$, where $\gamma \in (0, 1)$, we can rewrite the Lagrange remainder term in \cref{eq:eq.lagr} as:
\begin{equation}
    \frac{\exp (\xi)}{2} (\left\langle \boldsymbol{u},\boldsymbol{v} \right\rangle)^2 = \frac{\exp (\gamma \cdot \left\langle \boldsymbol{u},\boldsymbol{v} \right\rangle)}{2} (\left\langle \boldsymbol{u},\boldsymbol{v} \right\rangle)^2.
\end{equation}
Then we have:
\begin{equation}
    \begin{aligned}
        \left\lvert \kappa(\boldsymbol{\widetilde{z}}_u,\boldsymbol{\widetilde{z}}_v) -  \kappa_{\text{exp}}(\boldsymbol{\widetilde{z}}_u,\boldsymbol{\widetilde{z}}_v) 
        \right\rvert = \frac{\exp (\gamma \cdot \left\langle \boldsymbol{u},\boldsymbol{v} \right\rangle)}{2} (\left\langle \boldsymbol{u},\boldsymbol{v} \right\rangle)^2 & < \;\max(e^{-\gamma}/2,e^{\gamma}/2) \\
        & < \;e^{\gamma}/2.
    \end{aligned}
\end{equation}
\end{proof}

\section{Details of Relational Asymmetric Local 2-WL Test} \label{app:app.2wl}
Firstly, we present the general form of the Weisfeiler-Lehman (WL) Test~\cite{weisfeiler1968reduction}. 
Let $\mathcal{G} = \{\mathcal{V}, \mathcal{E}, c\}$ denotes a graph where $\mathcal{V}$ represents the set of nodes, $\mathcal{E}$ represents the set of edges, and $c$ represents the node coloring. The node coloring, also known as the feature mapping, is denoted as $c: \mathcal{V} \rightarrow \mathbb{R}^d$. 
The purpose of the Weisfeiler-Lehman (\texttt{WL}) test is to detect graph isomorphism.
\begin{restatable}[Graph Isomorphism]{definition}{defgi}
    An \textit{isomorphism} between a graph $\mathcal{G} = \{\mathcal{V},\mathcal{E}, c\}$ and $\mathcal{G}^\prime = \{\mathcal{V}^\prime,\mathcal{E}^\prime, c^\prime\}$ is a \textit{bijection} $f:\mathcal{V} \rightarrow \mathcal{V}^\prime$ that satisfies $c(v) = c^\prime(f(v))$ for all $v\in\mathcal{V}$, and $(u,v) \in \mathcal{E}$ if and only if $(f(u), f(v)) \in \mathcal{E}^\prime$ for all $u,v\in\mathcal{V}$.
\end{restatable}
\begin{restatable}[\texttt{WL} test]{definition}{defwlt}
    Consider a graph $\mathcal{G} = \{\mathcal{V},\mathcal{E}, c\}$.  The \texttt{WL} test is defined as follows:
    \begin{equation}
        \begin{aligned}
            &\texttt{WL}^{(0)}(u) = c(u), \\
            &\texttt{WL}^{(t+1)}(u) = \tau\left( \texttt{WL}^{(t)}(u), \left\ldblbrace \texttt{WL}(v)\;|\; v\in\mathcal{N}(u) \right\rdblbrace \right),
        \end{aligned}
    \end{equation}
    where $\ldblbrace\cdot\rdblbrace$ denotes a multiset, $\mathcal{N}(u)$ means the neighborhood of node $u$, and $\tau$ maps the pair above injectively to a unique color that has not been used in previous iterations.
    Repeat the above steps $T$ times until convergence.
\end{restatable}
In general, the \texttt{WL} test can be used as a necessary but not sufficient condition for detecting graph isomorphism. 
Additionally, there exists the \texttt{WL}$_k$ ($k>1$) test~\cite{XuHLJ19}, which operates on $k$-tuples of nodes $\boldsymbol{u}\in \mathcal{V}^k$ and provides enhanced expressive power.

Now let us delve into the realm of knowledge graphs. 
Consider a knowledge graph denoted by $\mathcal{G} = \{\mathcal{V}, \mathcal{E}, \mathcal{R}, c\}$, where $\mathcal{V}$ represents the set of entities, $\mathcal{R}$ represents the set of relations, $\mathcal{E}$ represents the set of facts, and $c$ represents the entity coloring scheme. 
Similarly, we provide the definition for knowledge graph isomorphism.
\begin{restatable}[Knowledge Graph Isomorphism]{definition}{defkgi}
    An \textit{isomorphism} between knowledge graphs $\mathcal{G} = \{\mathcal{V},\mathcal{E}, \mathcal{R}, c\}$ and $\mathcal{G}^\prime = \{\mathcal{V}^\prime,\mathcal{E}^\prime, \mathcal{R}^\prime, c^\prime\}$ is a \textit{bijection} $f:\mathcal{V} \rightarrow \mathcal{V}^\prime$ that satisfies $c(v) = c^\prime(f(v))$ for all $v\in\mathcal{V}$, and $r(u,v) \in \mathcal{E}$ if and only if $r(f(u), f(v)) \in \mathcal{E}^\prime$ for all $r\in\mathcal{R}$ and $u,v\in\mathcal{V}$.
\end{restatable}
The \texttt{R-WL} test is a relational variant of the \texttt{WL} test, proposed by \citet{Barcelo00O22}. It is defined as follows.
\begin{restatable}[\texttt{R-WL} test]{definition}{defrwlt}
    Let $\mathcal{G} = \{\mathcal{V},\mathcal{E}, \mathcal{R}, c\}$ be a knowledge graph. The \texttt{R-WL} test is defined as:
    \begin{equation}
        \begin{aligned}
            &\texttt{R-WL}^{(0)}(u) = c(u), \\
            &\texttt{R-WL}^{(t+1)}(u) = \tau\left( \texttt{R-WL}^{(t)}(u), \left\ldblbrace (\texttt{R-WL}(v),r)\;|\; v\in\mathcal{N}_r(u),r\in\mathcal{R} \right\rdblbrace \right),
        \end{aligned}
    \end{equation}
    where $\ldblbrace\cdot\rdblbrace$ denotes a multiset, $\mathcal{N}_r(u)$ refers to the neighborhood of node $u$ corresponding to relation $r$, and $\tau$ injectively maps the aforementioned pair to a unique color not yet used in prior iterations.
\end{restatable}
However, the aforementioned tests are not suitable for quantifying the expressive power of methods employed in link prediction, as they require a measurement of a \textit{binary variant}.
\begin{restatable}[\textit{binary variant} on knowledge graphs]{definition}{bvokg}
    A \textit{binary variant} on knowledge graphs is represented as a function $\xi$ that associates each knowledge graph $\mathcal{G}=\{\mathcal{V}, \mathcal{E},\mathcal{R},c\}$ with a function $\xi(\mathcal{G})$ defined on $\mathcal{V}^2$. This function satisfies the condition that for all knowledge graphs $\mathcal{G}$ and $\mathcal{G}^\prime$, all isomorphisms $f$ from $\mathcal{G}$ to $\mathcal{G}^\prime$, and all $2$-tuples of entities $\boldsymbol{u} \in \mathcal{V}^2$, it holds that $\xi(\mathcal{G})(\boldsymbol{u}) = \xi(\mathcal{G}^\prime)(f(\boldsymbol{u}))$.
\end{restatable}
To analyze the expressive power of knowledge graph reasoning methods, \citet{huang2023theory} introduces the \textit{relational asymmetric local 2-WL} test denoted by \texttt{RA-WL}$_2$.
Specifically, \texttt{RA-WL}$_2$ is defined on a knowledge graph $\mathcal{G} = \{ \mathcal{V}, \mathcal{E}, \mathcal{R}, c, \eta \}$ with a \textit{pairwise coloring} $\eta:\mathcal{V}\times\mathcal{V}\rightarrow\mathcal{D}$ that satisfies \textit{target node distinguishability}, meaning $\eta(u,u) \ne \eta(u,v)$ for all $u\ne v \in \mathcal{V}$.
Then \texttt{RA-WL}$_2$ is defined as:
\begin{restatable}[\texttt{RA-WL}$_2$ test]{definition}{rawlt}
    Let $\mathcal{G} = \{\mathcal{V},\mathcal{E}, \mathcal{R}, c\}$ be a knowledge graph. The \texttt{RA-WL}$_2$ test is defined as:
    \begin{equation}
        \begin{aligned}
            &\texttt{RA-WL}_2^{(0)}(u,v) = \eta(u,v), \\
            &\texttt{RA-WL}_2^{(t+1)}(u,v) = \tau\left( \texttt{RA-WL}_2^{(t)}(u,v), \left\ldblbrace (\texttt{RA-WL}_2(u,w),r)\;|\; w\in\mathcal{N}_r(u),r\in\mathcal{R} \right\rdblbrace \right),
        \end{aligned}
    \end{equation}
    where $\ldblbrace\cdot\rdblbrace$ denotes a multiset, $\mathcal{N}_r(u)$ means the neighborhood of node $u$ corresponding to relation $r$, and $\tau$ maps the pair above injectively to a unique color that has not been used in previous iterations.
\end{restatable}
\texttt{RA-WL}$_2$ test allows us to characterize the power of methods in terms of their ability to distinguish pairs of entities on knowledge graphs.

\section{Proof of \cref{theo:theo.2}} \label{app:app.theo2}
\begin{proof}
    To prove \cref{theo:theo.2}, we begin by establishing the equivalence between the query function $f_q(\cdot)$ and \texttt{RA-WL}$_2$ on original knowledge graph $\mathcal{E}$.
    The proof is mainly based on the results in \citet{huang2023theory}.
    As stated in \citet{huang2023theory}, a conditional message-passing neural network for inferring the query $(e_h, r_q, ?)$ is defined as follows:
    \begin{equation} \label{eq:eq.c.1}
        \begin{aligned}
            &\boldsymbol{h}_{u|e_h,r_q}^{(0)} = \textsc{Init}(u|e_h,r_q), \\
            &\boldsymbol{h}_{u|e_h,r_q}^{(t+1)} = \textsc{Upd}\left( \boldsymbol{h}_{u|e_h,r_q}^{(t)}, \textsc{Agg} \left(\left\{\mskip-5mu\left\{ \textsc{Msg} \left(\boldsymbol{h}_{v|e_h,r_q}^{(t)}, \boldsymbol{w}_{r_q} \right)|v\in\mathcal{N}_r(u),r\in\mathcal{R} \right\}\mskip-5mu\right\} \right), \textsc{Read} \left(\left\{\mskip-5mu\left\{ \boldsymbol{h}_{w|e_h,r_q}^{(t)}|w\in\mathcal{V} \right\}\mskip-5mu\right\} \right) \right),
        \end{aligned}
    \end{equation}
    where $\boldsymbol{h}_{u|e_h,r_q}^{(t)}$ represents a pairwise representation corresponding to $\boldsymbol{h}_{r_q}^{(t)}(e_h,u):\mathcal{V}\times\mathcal{V}\rightarrow \mathbb{R}^d$. Here, $\textsc{Init}$, $\textsc{Upd}$, $\textsc{Agg}$, $\textsc{Read}$, and $\textsc{Msg}_r$ are differentiable functions responsible for \textit{initialization}, \textit{update}, \textit{aggregation}, \textit{global readout}, and \textit{relation-specific message} computations, respectively. 
    Based on Theorem 5.1 in \citet{huang2023theory}, it is suggested that a message-passing neural network with an architecture equivalent to \cref{eq:eq.c.1}, featuring a \textit{target node distinguishable} \textsc{Init}, exhibits the same expressivity as \texttt{RA-WL}$_2$.

    Now we establish the equivalence of our proposed query function and \cref{eq:eq.c.1}. 
    \begin{lemma} \label{lem:lem.c.1}
        The query function $f_q(\cdot)$ shares the same architecture as the conditional message-passing neural network defined in \cref{eq:eq.c.1}.
    \end{lemma}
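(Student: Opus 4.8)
The plan is to verify Lemma~\ref{lem:lem.c.1} by matching the four constituent maps of the conditional message-passing framework of \cref{eq:eq.c.1}---namely $\textsc{Init}$, $\textsc{Msg}$, $\textsc{Agg}$ together with $\textsc{Read}$, and $\textsc{Upd}$---against the explicit definitions in \cref{eq:eq.6}. First I would recall the update rule of Q-RMPNN: $\boldsymbol{z}_u^{(0)} = [\boldsymbol{x}_u, \boldsymbol{\epsilon}]$ and $\boldsymbol{z}_u^{(l)} = \Phi^{(l)}(\boldsymbol{\alpha}^{(l)}\cdot\boldsymbol{z}_u^{(l-1)} + \sum_{r(v,u)\in\mathcal{E}} \boldsymbol{m}_{u|v,r}^{(l-1)})$, where $\boldsymbol{m}_{v|u,r} = g(\boldsymbol{z}_u,\boldsymbol{\hat r})$ and $\boldsymbol{\hat r}$ depends on the query relation $r_q$. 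I would then exhibit the identifications term by term.

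The key steps, in order, are as follows. (1) \emph{Initialization.} Set $\textsc{Init}(u|e_h,r_q) = [\boldsymbol{x}_u, \boldsymbol{\epsilon}]$. I would argue this is \textit{target node distinguishable} in the required sense: because the Gaussian noise $\boldsymbol{\epsilon}\sim N(\boldsymbol{0},\boldsymbol{I})$ is drawn to tag the source entity, the initial pairwise coloring $\eta(e_h,u)$ separates $u=e_h$ from $u\ne e_h$ almost surely, matching the \textit{target node distinguishability} hypothesis of Theorem~5.1 in \citet{huang2023theory}. (2) \emph{Relation-specific message.} Identify $\textsc{Msg}(\boldsymbol{h}_{v|e_h,r_q}^{(t)}, \boldsymbol{w}_{r_q})$ with $\boldsymbol{m}_{u|v,r}^{(l-1)} = g(\boldsymbol{z}_v^{(l-1)}, \boldsymbol{\hat r}) = \boldsymbol{z}_v^{(l-1)}\odot\boldsymbol{\hat r}$, noting that $\boldsymbol{\hat r} = \boldsymbol{R}[r_q]\cdot\boldsymbol{W}_r + \boldsymbol{b}_r$ is exactly a relation-specific, $r_q$-conditioned transformation, so this is a differentiable function of the neighbor's current representation and a relation-and-query-dependent weight, as \cref{eq:eq.c.1} demands. (3) \emph{Aggregation and readout.} Take $\textsc{Agg}$ to be summation over the multiset $\{\mskip-5mu\{\textsc{Msg}(\cdot)\}\mskip-5mu\}$, which is a permutation-invariant differentiable aggregator; the global $\textsc{Read}$ term can be taken as the trivial (constant / identity-on-nothing) readout, since \cref{eq:eq.6} has no explicit global pooling---this is a permitted special case. (4) \emph{Update.} Identify $\textsc{Upd}(\boldsymbol{h}^{(t)}, \textsc{Agg}(\cdot), \textsc{Read}(\cdot))$ with $\Phi^{(l)}(\boldsymbol{\alpha}^{(l)}\cdot\boldsymbol{z}_u^{(l-1)} + \textsc{Agg}(\cdot))$, which is a composition of a linear skip term with an MLP, hence differentiable. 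Concluding, all five maps of \cref{eq:eq.c.1} are instantiated by Q-RMPNN, so $f_q(\cdot)$ has the claimed architecture.

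The main obstacle I anticipate is not the algebraic matching---that is essentially bookkeeping---but two subtler points of fit. First, \cref{eq:eq.c.1} aggregates over $v\in\mathcal{N}_r(u)$ (out-neighbors under $r$), whereas \cref{eq:eq.6} sums over $r(v,u)\in\mathcal{E}$ (in-neighbors); I would address this by invoking the standard convention in this literature that the relation set $\mathcal{R}$ is closed under taking inverses (augmenting $\mathcal{G}$ with reverse edges), so the two index sets coincide after relabeling. Second, the $\textsc{Init}$ in \citet{huang2023theory} is a deterministic coloring while ours uses random noise $\boldsymbol{\epsilon}$; I would handle this by noting that target-node distinguishability only requires that the coloring separate $(e_h,e_h)$ from $(e_h,u)$, $u\ne e_h$, which holds with probability one, and that the expressivity statement of \cref{theo:theo.2} is understood in this almost-sure sense (equivalently, one may replace $\boldsymbol{\epsilon}$ by the deterministic indicator $\mathbb{I}_{u=e_h}$ as is done for V-RMPNN in \cref{eq:eq.7} without affecting the argument). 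Once Lemma~\ref{lem:lem.c.1} is in place, the expressivity of $f_q(\cdot)$ on the original graph $\mathcal{E}$ follows immediately from Theorem~5.1 of \citet{huang2023theory}, and the remaining work toward \cref{theo:theo.2} is to propagate this through the attention layer acting on the extended graph $\mathcal{E}\cup\tilde{\mathcal{E}}$.
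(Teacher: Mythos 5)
Your proposal is correct and follows essentially the same route as the paper: a term-by-term identification of $[\boldsymbol{x}_u,\boldsymbol{\epsilon}]$ with $\textsc{Init}$, $\boldsymbol{m}_{u|v,r}^{(l-1)}$ with $\textsc{Msg}$, the sum over $r(v,u)\in\mathcal{E}$ with $\textsc{Agg}$, $\Phi^{(l)}(\boldsymbol{\alpha}^{(l)}\cdot\boldsymbol{z}_u^{(l-1)}+\cdot)$ with $\textsc{Upd}$, and the $\textsc{Read}$ term simply omitted (taken trivial). Your extra remarks on edge orientation and on target-node distinguishability are fine but not needed for this lemma, since the paper handles distinguishability separately in \cref{lem:lem.c.2}.
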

    \begin{proof}
        From \cref{eq:eq.6}, we derive the following:
        \begin{equation}
            \begin{aligned}
                \underbrace{\boldsymbol{z}_u^{(0)}}_{\boldsymbol{h}_{u|e_h,r_q}^{(0)}} &\leftarrow \underbrace{[\boldsymbol{x}_u, \boldsymbol{\epsilon}]}_{\textsc{Init}}, \\
                \underbrace{\boldsymbol{z}_u^{(l)}}_{\boldsymbol{h}_{u|e_h,r_q}^{(t+1)}} &\leftarrow \underbrace{\Phi \left(\boldsymbol{\alpha}^{(l-1)} \cdot \boldsymbol{z}_u^{(l-1)} + \underbrace{\sum_{r(v,u) \in \mathcal{E}} \underbrace{\boldsymbol{m}_{u|v,r}^{(l-1)}}_{\textsc{Msg}}}_{\textsc{Agg}}  \right)}_{\textsc{Upd}},
            \end{aligned}
        \end{equation}
        where we have omitted the $\textsc{Read}$ function in our query function.
    \end{proof}
    We will now demonstrate that the $\textsc{Init}$ function in our query function ensures \textit{target node distinguishability}, which is defined as $\textsc{Init}(u|u,r_q) \ne \textsc{Init}(v|u,r_q)$ if $u \ne v$ for each $v \in \mathcal{V}\setminus \{u\}$.
    \begin{lemma} \label{lem:lem.c.2}
        For each $v \in \mathcal{V}\setminus \{u\}$, there exists $\lim_{d \rightarrow \infty} P(\boldsymbol{z}_{u}^{(0)} = \boldsymbol{z}_{v}^{(0)}) = 0$, where $\boldsymbol{z}_u^{(0)},\boldsymbol{z}_v^{(0)}\in\mathbb{R}^d$.
    \end{lemma}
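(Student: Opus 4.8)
The plan is to exploit the independent Gaussian block that Q-RMPNN appends to every entity at layer~$0$. By \cref{eq:eq.6}, $\boldsymbol{z}_u^{(0)} = [\boldsymbol{x}_u, \boldsymbol{\epsilon}_u]$, where $\boldsymbol{\epsilon}_u \sim N(\boldsymbol{0},\boldsymbol{I})$ is a fresh draw for each $u\in\mathcal{V}$ occupying a block whose width $m$ grows with the hidden dimension $d$. Since a concatenation of two vectors agrees only when both blocks agree, for any $v\ne u$ the event $\{\boldsymbol{z}_u^{(0)} = \boldsymbol{z}_v^{(0)}\}$ is contained in $\{\boldsymbol{\epsilon}_u = \boldsymbol{\epsilon}_v\} = \{\lVert\boldsymbol{\epsilon}_u-\boldsymbol{\epsilon}_v\rVert_2^2 = 0\}$, so it suffices to control the latter probability.

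The next step is to rewrite this equality event as a chi-squared tail. Independence gives $\boldsymbol{\epsilon}_u - \boldsymbol{\epsilon}_v \sim N(\boldsymbol{0},2\boldsymbol{I})$, hence $\tfrac{1}{2}\lVert\boldsymbol{\epsilon}_u-\boldsymbol{\epsilon}_v\rVert_2^2 \sim \chi^2_m$, and for any fixed $\delta>0$,
\begin{equation}
  P\!\left(\boldsymbol{z}_u^{(0)} = \boldsymbol{z}_v^{(0)}\right) \;\le\; P\!\left(\tfrac{1}{2}\lVert\boldsymbol{\epsilon}_u-\boldsymbol{\epsilon}_v\rVert_2^2 = 0\right) \;\le\; P\!\left(\chi^2_m \le \delta\right).
\end{equation}
Because $\chi^2_m$ has mean $m$ and variance $2m$, Chebyshev's inequality (or, for an exponential rate, the Laurent--Massart bound) yields $P(\chi^2_m \le \delta)\to 0$ as $m\to\infty$; since $d\to\infty$ forces $m\to\infty$, the claimed limit follows. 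I would also note that $\chi^2_m$ is absolutely continuous, so $P(\boldsymbol{\epsilon}_u=\boldsymbol{\epsilon}_v)=0$ already for every finite $d$ --- the limiting statement is really a quantitative strengthening asserting that the two initializations become \emph{uniformly well-separated}, not merely distinct, which is what makes the downstream argument robust.

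I do not expect a genuine obstacle here: the argument is a two-line concentration estimate. The only care needed is bookkeeping --- being explicit that $\boldsymbol{\epsilon}$ in \cref{eq:eq.6} denotes a per-entity independent draw (otherwise it would not separate entities at all) and that the noise-block width grows with $d$. Once \cref{lem:lem.c.2} is in hand, combining it with \cref{lem:lem.c.1} shows that the \textsc{Init} of $f_q(\cdot)$ is, with probability one, target node distinguishable (indeed it separates every pair of entities), so Theorem~5.1 of \citet{huang2023theory} applies and $f_q(\cdot)$ is at least as expressive as \texttt{RA-WL}$_2$ on $\mathcal{E}$; folding in the estimated graph $\mathcal{\tilde{E}}$ then finishes \cref{theo:theo.2}.
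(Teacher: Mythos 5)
Your proposal is correct, and it reaches the conclusion by a genuinely different (and in fact more careful) route than the paper. The paper's own proof factorizes the coincidence event coordinate-wise, writes $P(z_{u,i}=z_{v,i})$ as the Gaussian density product $\tfrac{1}{2\pi}\exp(-z_{u,i}^2)$, bounds each factor by $\tfrac{1}{2\pi}$, and concludes $P(\boldsymbol{z}_u^{(0)}=\boldsymbol{z}_v^{(0)})\le (1/2\pi)^d\to 0$; this is a density-level heuristic (for continuous coordinates each $P(z_{u,i}=z_{v,i})$ is literally zero, and the displayed "probability" is itself a random quantity), but it does deliver an explicit exponential decay in $d$. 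You instead contain the event $\{\boldsymbol{z}_u^{(0)}=\boldsymbol{z}_v^{(0)}\}$ in $\{\boldsymbol{\epsilon}_u=\boldsymbol{\epsilon}_v\}$, note $\tfrac12\Vert\boldsymbol{\epsilon}_u-\boldsymbol{\epsilon}_v\Vert_2^2\sim\chi^2_m$, and invoke absolute continuity together with a Chebyshev/Laurent--Massart tail bound. This is rigorous as stated, gives the exact statement $P(\boldsymbol{z}_u^{(0)}=\boldsymbol{z}_v^{(0)})=0$ already at every finite $d$, and upgrades the lemma to a quantitative claim that the initializations are well-separated by a fixed margin with probability tending to one, which is what the downstream \emph{target node distinguishability} argument actually needs. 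Both arguments rest on the same essential modeling assumption, which you rightly make explicit: the noise $\boldsymbol{\epsilon}$ in \cref{eq:eq.6} must be an independent draw per entity (the paper's proof implicitly assumes this too, treating $z_{u,i}$ and $z_{v,i}$ as independent), and the noise block width must grow with $d$. In short: the paper buys an explicit (if heuristic) rate $(1/2\pi)^d$; you buy full rigor and a strictly stronger separation statement, and your handling of the deterministic block $\boldsymbol{x}_u$ via the containment argument is cleaner than the paper's "consider only the random component."
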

    \begin{proof}
        For simplicity, we consider only the random component of $\boldsymbol{z}^{(0)} \in \mathbb{R}^d$, assuming $\boldsymbol{z}^{(0)}[i] \sim N(0,1)$. Let $\boldsymbol{z}_u^{(0)} = [z_{u,0},z_{u,1},\ldots,z_{u,d}]$ and $\boldsymbol{z}_v^{(0)} = [z_{v,0},z_{v,1},\ldots,z_{v,d}]$. We can calculate:
        \begin{equation}
            \begin{aligned}
                P(\boldsymbol{z}_{u}^{(0)} = \boldsymbol{z}_{v}^{(0)}) &= \prod_{i = 0}^d P(z_{u,i} = z_{v,i}) \\
                &= \prod_{i = 0}^d \frac{1}{2\pi} \exp\left(-z_{u,i}^2\right) \\
                &\le \prod_{i = 0}^d \frac{1}{2\pi} \\
                &= \left(\frac{1}{2\pi}\right)^d.
            \end{aligned}
        \end{equation}
        Furthermore, using $\frac{1}{2\pi} < 1$, we find $\lim_{d \rightarrow \infty} P\left(\boldsymbol{z}_{u}^{(0)} = \boldsymbol{z}_{v}^{(0)}\right) = 0$.
    \end{proof}
    From Lemma \ref{lem:lem.c.2}, it follows that our $\textsc{Init}$ function ensures \textit{target node distinguishability} when employing a sufficiently large dimension $d$ for random features.
    Based on \cref{lem:lem.c.1} and \cref{lem:lem.c.2}, we can derive the following lemma:
    \begin{lemma} \label{lem:lem.c.3}
        The proposed query function $f_q(\cdot)$ can be as expressive as \texttt{RA-WL}$_2$ on graph $\mathcal{E}$.
    \end{lemma}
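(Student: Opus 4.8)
The plan is to assemble the two preceding lemmas together with the main expressivity result of \citet{huang2023theory}. By \cref{lem:lem.c.1}, the query function $f_q(\cdot)$ is already identified as an instance of the conditional message-passing family of \cref{eq:eq.c.1}: the initialization $[\boldsymbol{x}_u,\boldsymbol{\epsilon}]$ realizes $\textsc{Init}$, the relational message $\boldsymbol{m}_{u|v,r}^{(l)}$ realizes $\textsc{Msg}$, the summation over incident facts realizes $\textsc{Agg}$, and the MLP-parameterized $\Phi^{(l)}$ applied to $\boldsymbol{\alpha}^{(l)}\cdot\boldsymbol{z}_u^{(l-1)}$ plus the aggregate realizes $\textsc{Upd}$. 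The only component of \cref{eq:eq.c.1} absent from $f_q(\cdot)$ is the global readout $\textsc{Read}$; I would argue this omission is harmless, since \texttt{RA-WL}$_2$ is a \emph{local} test whose update rule never consults a global summary, so matching it does not require any readout term.

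Next I would invoke \cref{lem:lem.c.2}: for a sufficiently large random-feature dimension $d$, the Gaussian component of $\boldsymbol{z}_u^{(0)}$ guarantees, with probability tending to one, that $\boldsymbol{z}_u^{(0)}\neq\boldsymbol{z}_v^{(0)}$ for every $v\neq u$. Combined with the standard pairwise form of the conditional scheme (which carries a head indicator), this yields \emph{target node distinguishability} of $\textsc{Init}$, i.e. $\textsc{Init}(u\mid u,r_q)\neq\textsc{Init}(v\mid u,r_q)$ for all $v\neq u$. Thus, on any realization of the random features for which this event holds, $f_q(\cdot)$ is a conditional MPNN of the form \cref{eq:eq.c.1} equipped with a target-node-distinguishable initialization.

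Finally I would apply Theorem~5.1 of \citet{huang2023theory}, which states exactly that such a network attains at least the expressive power of \texttt{RA-WL}$_2$. The conclusion is that, choosing $d$ large enough, $f_q(\cdot)$ is almost surely as expressive as \texttt{RA-WL}$_2$ on $\mathcal{E}$, which is the claim; one records the mild probabilistic caveat inherited from \cref{lem:lem.c.2} rather than asserting the bound for every fixed finite realization.

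I expect the main obstacle to be verifying that the \emph{specific} parameterizations used inside $f_q(\cdot)$---the $\arg\max$/DistMult message $g(\boldsymbol{z}_u,\hat{\boldsymbol{r}})=\boldsymbol{z}_u\odot\hat{\boldsymbol{r}}$, the plain sum aggregator, and the finite-width MLP update $\Phi^{(l)}$---can realize, or uniformly approximate, the injective $\textsc{Msg}$, $\textsc{Agg}$, $\textsc{Upd}$ maps demanded by Theorem~5.1. This is the usual universal-approximation caveat, and I would discharge it by appealing to injectivity of sum aggregation over bounded-size multisets for suitable feature encodings and to universality of MLPs, exactly as in the GIN-style reasoning underlying \citet{huang2023theory}; a secondary point to pin down is that discarding $\textsc{Read}$ leaves expressivity above \texttt{RA-WL}$_2$ (rather than above the stronger global variant), which again follows from the locality of the test.
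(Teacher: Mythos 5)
Your proposal is correct and takes essentially the same route as the paper: it assembles \cref{lem:lem.c.1} (architectural equivalence of $f_q(\cdot)$ with the conditional message-passing scheme of \cref{eq:eq.c.1}) and \cref{lem:lem.c.2} (target node distinguishability of $\textsc{Init}$ via the random features, for large enough $d$) and then invokes Theorem~5.1 of \citet{huang2023theory} to conclude expressivity at the level of \texttt{RA-WL}$_2$ on $\mathcal{E}$. The caveats you flag --- the dropped $\textsc{Read}$ term, the probabilistic nature of the distinguishability guarantee, and the universal-approximation step for the injective $\textsc{Msg}$/$\textsc{Agg}$/$\textsc{Upd}$ maps --- are passed over silently in the paper but do not change the structure of the argument.
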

    Since the initialization in \cref{eq:eq.7} also satisfies the \textit{target node distinguishability}. 
    By employing a similar approach, we can also derive the following lemma.
    \begin{lemma} \label{lem:lem.c.3.1}
        The proposed value function $f_v(\cdot)$ can be as expressive as \texttt{RA-WL}$_2$ on graph $\mathcal{E}$.
    \end{lemma}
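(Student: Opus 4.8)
The plan is to prove \cref{lem:lem.c.3.1} by replaying, almost verbatim, the argument that gave \cref{lem:lem.c.3} for the query function, now applied to the value function $f_v(\cdot)$ of \cref{eq:eq.7}. Concretely I would carry out three steps: (i) exhibit $f_v(\cdot)$ as an instance of the conditional message-passing template of \cref{eq:eq.c.1}, the analogue of \cref{lem:lem.c.1}; (ii) verify that its initialization satisfies \emph{target node distinguishability}; and (iii) apply Theorem~5.1 of \citet{huang2023theory}, which asserts that any network of the form \cref{eq:eq.c.1} whose \textsc{Init} is target-node-distinguishable is exactly as expressive as \texttt{RA-WL}$_2$.

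For step (i), I would read the correspondence directly off \cref{eq:eq.7}: the map $u \mapsto [\boldsymbol{x}_u,\mathbb{I}_{u=h}\odot\boldsymbol{1}]$ is \textsc{Init}; the relational message $\boldsymbol{m}_{u|v,r}^{(l-1)} = g(\boldsymbol{z}_{v}^{(l-1)},\boldsymbol{\hat{r}})$ is \textsc{Msg}; the summation $\sum_{r(v,u)\in\mathcal{E}}$ is \textsc{Agg}; and $\boldsymbol{z}\mapsto\Psi^{(l)}(\boldsymbol{\beta}^{(l)}\cdot\boldsymbol{z}+\cdot)$ is \textsc{Upd}, with the global \textsc{Read} simply dropped --- exactly as for $f_q(\cdot)$. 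Since \citet{huang2023theory} permit \textsc{Read} to be omitted without weakening the \texttt{RA-WL}$_2$ lower bound (a readout can only add discriminative power), this identification is structurally identical to \cref{lem:lem.c.1}, with $(\Phi,\boldsymbol{\alpha})$ replaced by $(\Psi,\boldsymbol{\beta})$.

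Step (ii) is where $f_v(\cdot)$ is in fact easier than $f_q(\cdot)$, so I would not need the probabilistic argument of \cref{lem:lem.c.2}. For the query $(h,r_q,?)$ the head-entity labeling feature forces $\boldsymbol{z}_{h|h,r_q}^{(0)} = [\boldsymbol{x}_h,\boldsymbol{1}]$ while $\boldsymbol{z}_{v|h,r_q}^{(0)} = [\boldsymbol{x}_v,\boldsymbol{0}]$ for every $v\in\mathcal{V}\setminus\{h\}$; these differ on the indicator block since $\boldsymbol{1}\neq\boldsymbol{0}$, so $\textsc{Init}(h\,|\,h,r_q)\neq\textsc{Init}(v\,|\,h,r_q)$ holds \emph{deterministically} and with no requirement on the hidden dimension. (Note that target node distinguishability only demands that the target node $h$ be colored differently from every other node, not that all non-target nodes be mutually distinguishable, so this suffices.) Combining (i) and (ii), Theorem~5.1 of \citet{huang2023theory} yields that $f_v(\cdot)$ is as expressive as \texttt{RA-WL}$_2$ on $\mathcal{E}$, which is the claim; together with \cref{lem:lem.c.3} it then feeds into the proof of \cref{theo:theo.2} on the extended graph $\mathcal{E}\cup\mathcal{\tilde{E}}$.

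The only place needing care --- the main, and fairly mild, obstacle --- is confirming that the remaining ingredients of $f_v(\cdot)$ meet the regularity hypotheses of \citet{huang2023theory}: that the DistMult-style estimate $\boldsymbol{m}_{u|v,r}=g(\boldsymbol{z}_u,\boldsymbol{\hat{r}})$ is a well-defined relation-conditioned message function fitting the \textsc{Msg} slot, that $\Psi^{(l)}$ with the skip term $\boldsymbol{\beta}^{(l)}$ is an admissible \textsc{Upd}, and that omitting \textsc{Read} does not push expressivity below \texttt{RA-WL}$_2$. All three were already checked for $f_q(\cdot)$ when proving \cref{lem:lem.c.3}, and since $f_v(\cdot)$ differs only in its initialization and in the labels of its update parameters, those verifications transfer unchanged; hence the lemma follows by the same reasoning, as claimed.
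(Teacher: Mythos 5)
Your proposal matches the paper's argument: the paper likewise notes that the head-entity labeling initialization in \cref{eq:eq.7} satisfies target node distinguishability (deterministically, unlike the Gaussian-noise case handled by \cref{lem:lem.c.2}) and then derives \cref{lem:lem.c.3.1} by the same route as \cref{lem:lem.c.3}, i.e.\ identifying $f_v(\cdot)$ with the conditional message-passing template of \cref{eq:eq.c.1} and invoking Theorem~5.1 of \citet{huang2023theory}. Your write-up simply makes explicit the steps the paper compresses into one sentence, so it is correct and essentially identical in approach.
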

    To further showcase the expressivity of our proposed attention function on the estimated graph $\mathcal{\tilde{E}}$, we aim to establish the injectiveness of our attention function. 
    The update process of \model's attention can be reformulated as:
    \begin{equation}
        \boldsymbol{z}_u = \phi \left( \xi \cdot \boldsymbol{\overline{z}}_u, f\left( \left\{ \boldsymbol{\hat{z}}_v : v \in \mathcal{V} \right\} \right)  \right),
    \end{equation}
    where $\boldsymbol{\overline{z}}_u$ means the output of the value function, $\left\{ \boldsymbol{\hat{z}}_v : v \in \mathcal{V} \right\}$ indicates the weighted values $\alpha_{uv} \boldsymbol{\overline{z}}_v$ where $\alpha_{uv}$ is the attention score and $\xi$ is a constant factor which is equal to $\lvert \mathcal{V} \rvert$ here.
    
    We denote the space of $\boldsymbol{\hat{z}}$ as $\mathcal{Z}$. 
    Recall that the query function $f_q(\cdot)$ shares the same expressive capacity as \texttt{RA-WL}$_2$, enabling it to distinguish the diverse neighbor structures associated with each $u\in \mathcal{V}$. 
    Consequently, for distinct $u$ values, we can obtain diverse multisets $\left\{ \boldsymbol{\hat{z}}_v : v \in \mathcal{V} \right\} \subset \mathcal{Z}$ which forms the estimated graph $\mathcal{\tilde{E}}$.

    Further, we have the following theorem:
    \begin{lemma} \label{lem:lem.c.4}
        Assuming $\mathcal{Z}$ is countable, the update process of \model's attention is injective to each pair $(\boldsymbol{\overline{z}}_v, \left\{ \boldsymbol{\hat{z}}_v : v \in \mathcal{V} \right\})$.
    \end{lemma}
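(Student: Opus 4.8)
The plan is to follow the classical injectivity argument for sum-aggregation layers from \citet{XuHLJ19}, adapted so that the distinguished term $\boldsymbol{\overline{z}}_u$ plays the role of the ``center'' feature and the multiset $\{ \boldsymbol{\hat{z}}_v : v \in \mathcal{V} \}$ plays the role of the neighbour multiset. The crucial structural facts I would use are: (i) for any fixed knowledge graph $\mathcal{V}$ is finite, so every multiset $\{ \boldsymbol{\hat{z}}_v : v \in \mathcal{V} \}$ has cardinality at most $n := \lvert \mathcal{V} \rvert$; and (ii) $\mathcal{Z}$ is countable by assumption. Together these allow a single injective ``encoding pass'' over multisets of bounded size.

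First I would fix an injection $\iota : \mathcal{Z} \to \mathbb{N}$ (which exists since $\mathcal{Z}$ is countable) and define $f_0 : \mathcal{Z} \to \mathbb{R}$ by $f_0(z) = (n+1)^{-\iota(z)}$. A base-$(n+1)$ argument then shows that $M \mapsto \sum_{x \in M} f_0(x)$ is injective over all multisets $M \subseteq \mathcal{Z}$ with $\lvert M \rvert \le n$: distinct such multisets produce distinct digit vectors, and since each digit is bounded by $n$ the corresponding sums cannot collide. This is exactly Lemma~5 of \citet{XuHLJ19} specialised to our setting, and it shows that the readout $f(\{ \boldsymbol{\hat{z}}_v : v \in \mathcal{V} \}) := \sum_{v \in \mathcal{V}} f_0(\boldsymbol{\hat{z}}_v)$ can be chosen injective on the multisets that actually arise.

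Next I would assemble the full update. The scaling $\boldsymbol{\overline{z}}_u \mapsto \xi \cdot \boldsymbol{\overline{z}}_u$ is injective because $\xi = \lvert \mathcal{V} \rvert \ne 0$, so the map $(\boldsymbol{\overline{z}}_u, M) \mapsto \bigl( \xi \cdot \boldsymbol{\overline{z}}_u, \, f(M) \bigr)$ is injective: the first coordinate recovers $\boldsymbol{\overline{z}}_u$ and, by the previous step, the second recovers $M$. Finally, since the lemma only asserts that the architecture \emph{can} be injective, it suffices to exhibit one admissible choice of $\phi$; taking $\phi$ to be injective---which its MLP parameterization can realize, e.g.\ as the identity on the concatenation of its two inputs---keeps the two coordinates separable, so $\boldsymbol{z}_u = \phi\bigl( \xi \cdot \boldsymbol{\overline{z}}_u, f(M) \bigr)$ is injective in $(\boldsymbol{\overline{z}}_u, M)$. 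Chaining the three injections gives the claim; combined with \cref{lem:lem.c.3,lem:lem.c.3.1} this yields that the attention layer loses no information and hence is at least as expressive as \texttt{RA-WL}$_2$ on $\mathcal{E}\cup\mathcal{\tilde{E}}$.

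I do not expect a genuinely hard step here---it is the standard GIN argument---so the ``obstacle'' is really a matter of care in two places. First, the uniform bound $n$ on multiset cardinality is immediate for a single finite graph, but in the inductive setting $\mathcal{V}$ varies, so the argument should be read per graph (with $f_0$ allowed to depend on $n$), consistent with how expressivity of message passing is usually stated. Second, and more delicate, $\xi$ here is a fixed scalar rather than the free irrational parameter used in Corollary~6 of \citet{XuHLJ19}; this is precisely why I route the argument through keeping $\boldsymbol{\overline{z}}_u$ and the readout as separate coordinates of $\phi$'s input rather than summing them, which sidesteps any need to tune $\xi$. If one instead insisted on a single scalar summary, one would have to pick $f_0$ with range decaying fast enough that $\xi f_0(\cdot)$ cannot overlap $\sum_v f_0(\cdot)$---an equivalent but fussier variant.
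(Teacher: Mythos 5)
Your proof is correct, and it is the same family of argument as the paper's: both reduce the claim to the sum-decomposition injectivity results of \citet{XuHLJ19} for multisets over a countable domain. The difference is in how the distinguished term is handled. The paper rewrites the update as $g\bigl((1+\zeta)\cdot f(\boldsymbol{\overline{z}}_u) + \sum_{v\in\mathcal{V}} f(\boldsymbol{\hat{z}}_v)\bigr)$ and appeals jointly to Lemma~5 and Corollary~6 of \citet{XuHLJ19}, with the FFN modeling $g\circ f$ via universal approximation; you instead keep $\xi\cdot\boldsymbol{\overline{z}}_u$ and the multiset readout $f(\{\boldsymbol{\hat{z}}_v : v\in\mathcal{V}\})$ as separate coordinates of $\phi$'s input, so you only need Lemma~5 (the base-$(n+1)$ encoding of bounded multisets) plus an injective $\phi$ on pairs, which an MLP realizes exactly. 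This buys you something real: Corollary~6 guarantees uniqueness only for suitable choices of the scalar weight (e.g.\ irrational $\epsilon$), whereas here $\xi=\lvert\mathcal{V}\rvert$ (equivalently $\zeta=\lvert\mathcal{V}\rvert-1$) is a fixed integer, so the paper's direct invocation of Corollary~6 glosses over exactly the point you flag; your two-coordinate routing sidesteps it cleanly, at the cost of asserting injectivity of $\phi$ on a pair rather than of a single scalar-summary map. Your caveats (per-graph boundedness of the multiset cardinality in the inductive setting, and the fixed-$\xi$ issue) are both apt, and your conclusion matches the role the lemma plays in the proof of \cref{theo:theo.2}.
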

    \begin{proof}
        To prove \cref{lem:lem.c.4}, we express the update process as follows:
        \begin{equation} \label{lem:lem.c.4.eq}
            \boldsymbol{z}_u = g \left( (1 + \zeta) \cdot f(\boldsymbol{\overline{z}}_u) + \sum_{v\in\mathcal{V}} f(\boldsymbol{\hat{z}}_v)  \right).
        \end{equation}
        It has been shown in Lemma 5 and Corollary 6 in \citet{XuHLJ19} that there exists a function $f: \mathcal{Z} \rightarrow \mathbb{R}^d$ for which \cref{lem:lem.c.4.eq} is unique for $(\boldsymbol{\overline{z}}_v, \left\{ \boldsymbol{\hat{z}}_v : v \in \mathcal{V} \right\})$.
        The FFN following the attention can be employed to model $g \circ f$ based on the universal approximation theorem~\cite{HornikSW89,Hornik91}.
        Consequently, \cref{lem:lem.c.4} can be concluded.
    \end{proof}
    \cref{lem:lem.c.4} provides evidence for the injectiveness of our proposed attention function on the estimated graph $\mathcal{\tilde{E}}$.
    From the definition of \texttt{RA-WL}$_2$, we can conclude that the attention layer of \model is capable of attaining the same level of expressiveness as \texttt{RA-WL}$_2$, thereby validating \cref{theo:theo.2}.
\end{proof}

\section{Experimental Details}
\subsection{Dataset Statistics}
We conduct experiments on four transductive knowledge graph reasoning datasets, and the statistics of these datasets are summarized in \cref{tab:app.tab.1}. 
Additionally, we perform experiments on three inductive knowledge graph reasoning datasets, each of which contains four different splits. The statistics of the inductive datasets are summarized in \cref{tab:app.tab.2}.

\begin{table}[htb]
    \centering
    \small
    \begin{tabular}{lccccc}
    \toprule
    
    \multicolumn{1}{c}{\multirow{2}[4]{*}{\bf{Dataset}}} & \multicolumn{1}{c}{\multirow{2}[4]{*}{\bf{\#Relation}}} & \multicolumn{1}{c}{\multirow{2}[4]{*}{\bf{\#Entity}}} & \multicolumn{3}{c}{\bf{\#Triplet}}    \\
    
    \cmidrule{4-6}  
    & & & \multicolumn{1}{c}{\#Train} & \multicolumn{1}{c}{\#Valid} & \multicolumn{1}{c}{\#Test}  \\

    \midrule

    FB15k-237 & 237 & 14,541 & 272,115 & 17,535 & 20,466 \\
    WN18RR & 11 & 40,943 & 86,835 & 3,034 & 3,134 \\
    NELL-995 & 200 & 74,536 & 149,678 & 543 & 2,818 \\
    YAGO3-10 & 37 & 123,182 & 1,079,040 & 5,000 & 5,000 \\
    
    \bottomrule
    \end{tabular}
    
  \caption{Dataset Statistics for transductive knowledge graph reasoning datasets.}
  \label{tab:app.tab.1}
\end{table}

\begin{table}[hbt]
    \centering
    \begin{adjustbox}{max width=.9\textwidth}
        \begin{tabular}{llcccccccccc}
            \toprule
            \multirow{2}{*}{\bf{Dataset}} & & \multirow{2}{*}{\bf{\#Relation}} & \multicolumn{3}{c}{\bf{Train}} & \multicolumn{3}{c}{\bf{Validation}} & \multicolumn{3}{c}{\bf{Test}} \\
            \cmidrule{4-12}
            & & & \#Entity & \#Query & \#Fact & \#Entity & \#Query & \#Fact & \#Entity & \#Query & \#Fact \\
            \midrule
            \multirow{4}{*}{FB15k-237}
            & v1 & 180 & 1,594 & 4,245 & 4,245 & 1,594 & 489 & 4,245 & 1,093 & 205 & 1,993 \\
            & v2 & 200 & 2,608 & 9,739 & 9,739 & 2,608 & 1,166 & 9,739 & 1,660 & 478 & 4,145 \\
            & v3 & 215 & 3,668 & 17,986 & 17,986 & 3,668 & 2,194 & 17,986 & 2,501 & 865 & 7,406 \\
            & v4 & 219 & 4,707 & 27,203 & 27,203 & 4,707 & 3,352 & 27,203 & 3,051 & 1,424 & 11,714 \\
            \midrule
            \multirow{4}{*}{WN18RR}
            & v1 & 9 & 2,746 & 5,410 & 5,410 & 2,746 & 630 & 5,410 & 922 & 188 & 1,618 \\
            & v2 & 10 & 6,954 & 15,262 & 15,262 & 6,954 & 1,838 & 15,262 & 2,757 & 441 & 4,011 \\
            & v3 & 11 & 12,078 & 25,901 & 25,901 & 12,078 & 3,097 & 25,901 & 5,084 & 605 & 6,327 \\
            & v4 & 9 & 3,861 & 7,940 & 7,940 & 3,861 & 934 & 7,940 & 7,084 & 1,429 & 12,334 \\
            \midrule
            \multirow{4}{*}{NELL-995}
            & v1 & 14 & 3,103 & 4,687 & 4,687 & 3,103 & 414 & 4,687 & 225 & 100 & 833 \\
            & v2 & 86 & 2,564 & 15,262 & 8,219 & 8,219 & 922 & 8,219 & 2,086 & 476 & 4,586 \\
            & v3 & 142 & 4,647 & 16,393 & 16,393 & 4,647 & 1,851 & 16,393 & 3,566 & 809 & 8,048 \\
            & v4 & 76 & 2,092 & 7,546 & 7,546 & 2,092 & 876 & 7,546 & 2,795 & 7,073 & 731 \\
            \bottomrule
        \end{tabular}
    \end{adjustbox}
    \caption{Dataset Statistics for inductive knowledge graph reasoning datasets. In each split, one needs to infer \#Query triplets based \#Fact triplets.}
    \label{tab:app.tab.2}
\end{table}

\subsection{Evaluation}
For each dataset, we evaluate the performance of ranking each answer entity against all negative entities given a query $(h,r,?)$. The evaluation metrics we use are the mean reciprocal rank (MRR)~\cite{BordesUGWY13} and Hits at $n$ (H@$n$)~\cite{BordesUGWY13}, which are computed as follows:
\begin{equation}
    \begin{aligned}
        &\text{MRR} = \frac{1}{|\mathcal{A}|} \sum_{i=1}^{|A|} \frac{1}{rank_i}, \\
        &\text{Hits at } n = \frac{1}{|\mathcal{A}|} \mathbb{I} [rank_i \le k],
    \end{aligned}
\end{equation}
where $\mathcal{A}$ indicates the answer entity set, $\mathbb{I} [\cdot]$ indicates the indicative function, and $rank_i$ indicates the rank of each answer entity.

\subsection{Implement Details}
Our model is comprised of three modules: the input layer, the attention layer, and the output layer. 
We present the details for each of them as follows.
\begin{itemize}
    \item The input layer consists of the initialization of entity representations and relation representations. We initialize entity representations as all-zero vectors $\boldsymbol{X} = [0]^{\lvert \mathcal{V} \rvert \times d}$ and relation representations as randomly initialized learnable vectors $\boldsymbol{R} \in \mathbb{R}^{\lvert \mathcal{R} \rvert \times d}$.
    \item The attention layer is composed of the attention function which is followed by a skip-connection, an FFN, and two normalization layers before and after the FFN.
    The attention function consists of three sub-modules as described in \cref{sec:sec.4.1}. Specifically, each attention function feeds entity representations $\boldsymbol{X}^{(l-1)}$ and relation representation $\boldsymbol{R}$ and outputs the updated entity representation $\boldsymbol{X}^{(l)}$.
    The MLPs in the attention function are implemented as three-layer MLPs with ReLU~\cite{GlorotBB11} activation. 
    Additionally, We implement FFN as a two-layer MLP with ReLU~\cite{GlorotBB11} activation.
    \item The output layer is a feed-forward layer for prediction, which maps the entity representations into the predicted scores.
    The scores will be used for ranking answer entities.
\end{itemize}
For each experiment, we employ a fixed random seed and run it multiple times to report the average performance.

\subsection{Hyperparameters Setup}
For each dataset, we performed hyperparameter tuning on the validation set. 
We considered different values for the learning rate ($lr$) from the set $\{1e-4, 5e-4, 1e-3, 5e-3\}$, weight decay ($wd$) from the set $\{0, 1e-6, 1e-5, 1e-4\}$, hidden dimension ($d$) from the set $\{16, 32, 64\}$, number of negative samples ($|[t^\prime]|$) from the set $\{2^6, 2^8, 2^{10}, 2^{12}, 2^{14}, 2^{16}\}$, number of layers for the query function ($\widetilde{L}$) from the set $\{1, 2, 3\}$, number of layers for the value function ($\widehat{L}$) from the set $\{1, 2, 3\}$, and number of layers for \model ($L$) from the set $\{1, 2, 3\}$. 

\subsection{Hardcore Configurations}
We conduct all experiments with:
\begin{itemize}
    \item Operating System: Ubuntu 22.04.3 LTS.
    \item CPU: Intel~(R) Xeon~(R) Platinum 8358 CPU @ 2.60GHz with 1TB DDR4 of Memory and Intel Xeon Gold 6148 CPU @ 2.40GHz with 384GB DDR4 of Memory.
    \item GPU: NVIDIA Tesla A100 SMX4 with 40GB of Memory and NVIDIA Tesla V100 SXM2 with 32GB of Memory.
    \item Software: CUDA 12.1, Python 3.9.14, PyTorch~\cite{PaszkeGMLBCKLGA19} 2.1.0.
\end{itemize}

\section{More Experimental Results}
\subsection{\model v.s. \textsc{Ultra}} \label{app:app.aer.ultra}
We present the experimental results of \model and \textsc{Ultra} in \cref{tab:app.tab.3}.
It can be observed that \model achieves comparable performance to \textsc{Ultra}. Notably, \textsc{Ultra} demonstrates state-of-the-art (SOTA) performance in certain datasets due to its utilization of extensive pretraining data. 
As mentioned previously, our method can be easily adapted to address full-inductive tasks with minor modifications, making it compatible with a pre-training and fine-tuning paradigm. 
We intend to explore this avenue in future work.

\begin{table*}[htb]
    \centering
    \begin{adjustbox}{width=.8\textwidth}
    
    \begin{tabular}{lcccccccccccc}
    \toprule
    
    \multicolumn{1}{c}{\multirow{2}[4]{*}{\bf{Method}}} & \multicolumn{3}{c}{\bf{v1}} & \multicolumn{3}{c}{\bf{v2}} & \multicolumn{3}{c}{\bf{v3}} & \multicolumn{3}{c}{\bf{v4}}   \\
    
    \cmidrule(r){2-4} \cmidrule(r){5-7} \cmidrule(r){8-10} \cmidrule(r){11-13} 
    & \multicolumn{1}{c}{MRR} & \multicolumn{1}{c}{H@1} & \multicolumn{1}{c}{H@10} & \multicolumn{1}{c}{MRR} & \multicolumn{1}{c}{H@1} & \multicolumn{1}{c}{H@10} & \multicolumn{1}{c}{MRR} & \multicolumn{1}{c}{H@1} & \multicolumn{1}{c}{H@10} & \multicolumn{1}{c}{MRR} & \multicolumn{1}{c}{H@1} & \multicolumn{1}{c}{H@10}   \\

    \midrule

    \multicolumn{13}{c}{\bf{WN18RR}} \\
    \midrule

    \textsc{Ultra}{\small~\cite{abs-2310-04562}} & 0.685 & - & 79.3 & 0.679 & - & 77.9 & 0.411 & - & 54.6 & 0.614 & - & 72.0 \\
    \hdashline
    \rowcolor{Tan!20}
    \bf{\model} & \bf{0.752} & \bf{71.5} & \bf{81.9} & \bf{0.709} & \bf{65.6} & \bf{81.7} & \bf{0.467} & \bf{40.6} & \bf{57.1} & \bf{0.646} & \bf{60.9} & \bf{72.7} \\

    \midrule
    \multicolumn{13}{c}{\bf{NELL-995}} \\
    \midrule
    
    \textsc{Ultra}{\small~\cite{abs-2310-04562}} & 0.757 & - & 87.8 & \bf{0.575} & - & \bf{76.1} & \bf{0.563} & - & \bf{75.5} & \bf{0.469} & - & \bf{73.3} \\
    \hdashline
    \rowcolor{Tan!20}
    \bf{\model} & \bf{0.827} & \bf{77.0} & \bf{93.0} & 0.465 & 35.7 & 65.7 & 0.478 & 37.8 & 65.7 & 0.378 & 26.7 & 59.8 \\
    
    \bottomrule
    \end{tabular}
    \end{adjustbox}
    
  \caption{Inductive knowledge graph reasoning performance comparison between \model and \textsc{Ultra}. The best results are \textbf{boldfaced}. Our proposed model, \model is marked by {\colorbox{Tan!20}{\textbf{tan}}}.}
  \label{tab:app.tab.3}
\end{table*}

\begin{figure}[htb]
    \centering

    \subfigure[$(\texttt{Decca\_Records},\; \texttt{artist},\; ?)$]{
        \includegraphics[scale=0.69]{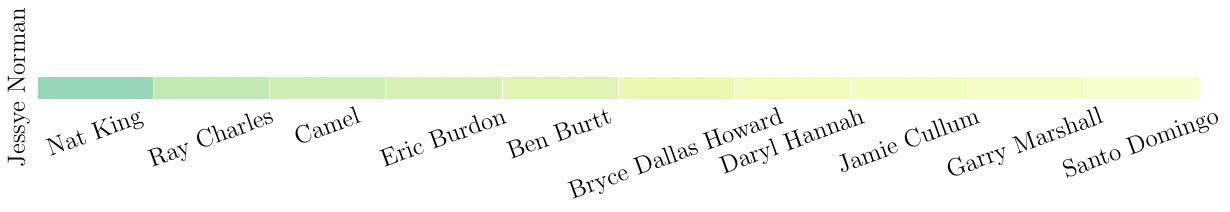}
    }
    \vspace{-0.3cm}
    \subfigure[$(\texttt{Egg},\; \texttt{nutrition\_fact},\; ?)$]{
        \includegraphics[scale=0.7]{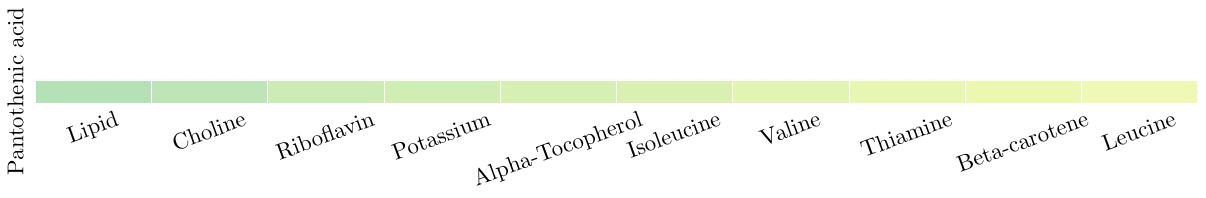}
    }
    \vspace{-0.3cm}
    \subfigure[$(\texttt{Jewish\_people},\; \texttt{ethnicity},\; ?)$]{
        \includegraphics[scale=0.7]{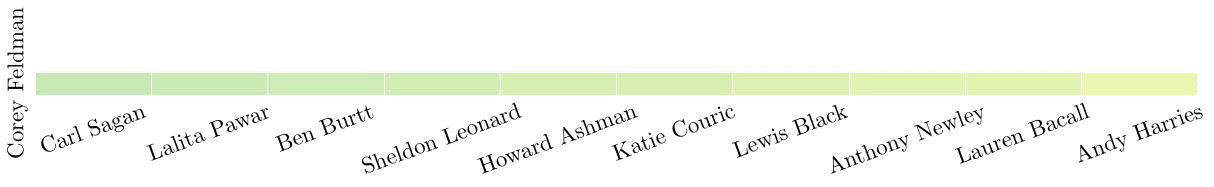}
    }
    \vspace{-0.3cm}
    \subfigure[$(\texttt{Piano},\; \texttt{music\_group},\; ?)$]{
        \includegraphics[scale=0.7]{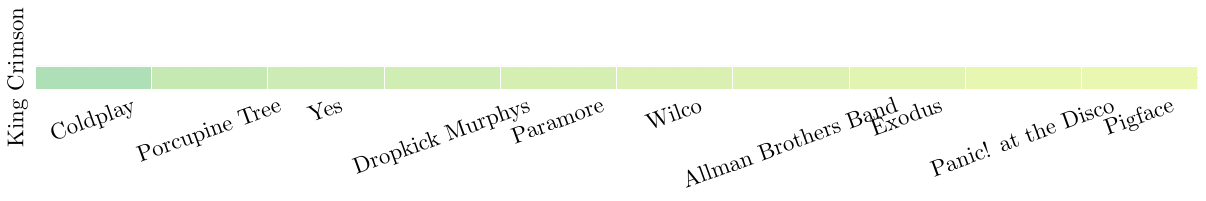}
    }
    \vspace{-0.3cm}
    \subfigure[$(\texttt{Bachelor's\_degree},\; \texttt{educational\_degree},\; ?)$]{
        \includegraphics[scale=0.7]{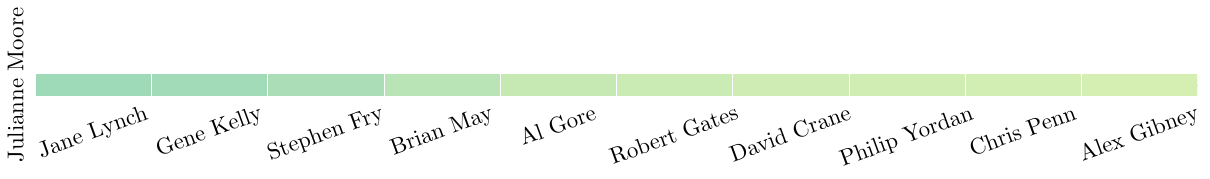}
    }
    
    \caption{Visualization of \model attention on FB15k-237 test set. We select the attention matrix corresponding to the answer entity for each test fact and visualize the top-$10$ entities, excluding the answer entity itself. }
    \label{app.fig.vis}
\end{figure}

\subsection{Performance of Homogeneous Graph Link Prediction}
Although \model is primarily designed for knowledge graph reasoning, it can also be applied to general link prediction tasks.
In this section, we present experimental results for the homogeneous graph link task using datasets including Cora, Citeseer, and PubMed~\cite{SenNBGGE08}.
We follow the same settings~(baselines and other experimental setup) as in \citet{ZhuZXT21}.
\cref{tab:app.tab.4} summarizes the results of \model and baselines on three datasets.
We can observe that \model surpasses the baselines on Cora and PubMed and achieves comparable performance on Citeseer, which demonstrates the effectiveness of \model on general graph link prediction tasks.

\begin{table*}[htb]
    \centering
    \begin{adjustbox}{max width=.7\textwidth}
    \begin{tabular}{lcccccc}
    \toprule
    
    \multicolumn{1}{c}{\multirow{2}[4]{*}{\bf{Method}}} & \multicolumn{2}{c}{\bf{Cora}} & \multicolumn{2}{c}{\bf{Citeseer}} & \multicolumn{2}{c}{\bf{PubMed}}   \\
    
    \cmidrule(r){2-3} \cmidrule(r){4-5} \cmidrule(r){6-7}  
    & \multicolumn{1}{c}{AUROC} & \multicolumn{1}{c}{AP} & \multicolumn{1}{c}{AUROC} & \multicolumn{1}{c}{AP} & \multicolumn{1}{c}{AUROC} & \multicolumn{1}{c}{AP}   \\

    \midrule

    Katz Index~\cite{katz1953new} & 0.834 & 0.889 & 0.768 & 0.810 & 0.757 & 0.856 \\
    Personalized PageRank~\cite{page1998pagerank} & 0.845 & 0.899 & 0.762 & 0.814 & 0.763 & 0.860 \\
    SimRank~\cite{JehW02} & 0.838 & 0.888 & 0.755 & 0.805 & 0.743 & 0.829 \\
    DeepWalk~\cite{PerozziAS14} & 0.831 & 0.850 & 0.805 & 0.836 & 0.844 & 0.841 \\
    LINE~\cite{TangQWZYM15} & 0.844 & 0.879 & 0.838 & 0.868 & 0.891 & 0.914 \\
    VGAE~\cite{KipfW16a} & 0.914 & 0.926 & 0.908 & 0.920 & 0.944 & 0.947 \\
    S-VGAE~\cite{DavidsonFCKT18} & 0.941 & 0.941 & \bf{0.947} & \bf{0.952} & 0.960 & 0.960 \\
    SEAL~\cite{ZhangC18} & 0.933 & 0.942 & 0.905 & 0.924 & 0.978 & 0.979 \\
    TLC-GNN~\cite{YanMGT021} & 0.934 & 0.931 & 0.909 & 0.916 & 0.970 & 0.968 \\
    NBFNet~\cite{ZhuZXT21} & 0.956 & 0.962 & 0.923 & 0.936 & 0.983 & 0.982 \\
    
    \hdashline
    \rowcolor{Tan!20}
    \bf{\model} & \bf{0.961} & \bf{0.965} & 0.941 & 0.950 & \bf{0.987} & \bf{0.988} \\
    
    \bottomrule
    \end{tabular}
    \end{adjustbox}
    
  \caption{Results of homogeneous graph link prediction task. The best results are \textbf{boldfaced}. Our proposed model, \model is marked by {\colorbox{Tan!20}{\textbf{tan}}}. Most baseline results are taken from original papers.}
  \label{tab:app.tab.4}
\end{table*}

\subsection{Impact of the Number of Layers}
The number of layers, denoted as $L$, $\widetilde{L}$, and $\widehat{L}$, is a crucial hyperparameter in \model. 
To illustrate the influence of the number of layers, we present the experimental results on the transductive WN18RR dataset in \cref{tab:app.tab.5}.
We can observe that both the number of attention layers and the number of value function layers play a crucial role in determining the reasoning performance of \model. 
One notable finding is that increasing the number of attention layers can yield significant performance improvements, even in scenarios where the value function layers are limited. 
This finding further reinforces the effectiveness of the attention layers proposed in this paper.
Moreover, increasing the number of query function layers enhances the inference performance, particularly for shallow models~(\textit{e.g.}, limited number of attention layers and value function layers).
This observation suggests that the query function is adept at effectively capturing structural information.

\subsection{Case Study on \model's Interpretation}
To gain a deeper understanding of the attention function of \model, we present visualizations of the attention matrix in \cref{app.fig.vis}. 
These examples demonstrate that \model is capable of effectively capturing the inherent patterns among various entities, which contributes to the enhancement of its performance. 
For instance, when examining the test fact $(\texttt{Egg},\; \texttt{nutrition\_fact},\; ?)$, we observe that \model successfully distinguishes the nutrients of eggs, such as Lipid, Choline, and Riboflvain, aligning with human cognition.

\begin{table*}[htb]
    \centering
    \begin{adjustbox}{width=.6\textwidth}
    
    \begin{tabular}{cccccccccc}
    \toprule
    
    \multicolumn{1}{c}{\multirow{2}[4]{*}{\bf{\#Value Layers}}} & \multicolumn{3}{c}{\bf{\#Attention Layers=1}} & \multicolumn{3}{c}{\bf{\#Attention Layers=2}} & \multicolumn{3}{c}{\bf{\#Attention Layers=3}} \\
    
    \cmidrule(r){2-4} \cmidrule(r){5-7} \cmidrule(r){8-10} 
    & \multicolumn{1}{c}{MRR} & \multicolumn{1}{c}{H@1} & \multicolumn{1}{c}{H@10} & \multicolumn{1}{c}{MRR} & \multicolumn{1}{c}{H@1} & \multicolumn{1}{c}{H@10} & \multicolumn{1}{c}{MRR} & \multicolumn{1}{c}{H@1} & \multicolumn{1}{c}{H@10} \\

    \midrule

    \multicolumn{10}{c}{\bf{\#Query Layers}=1} \\
    \midrule
    1 & \cellcolor{YellowGreen!10}{0.374} & \cellcolor{Cyan!10}{36.4} & \cellcolor{Goldenrod!10}{39.4} & \cellcolor{YellowGreen!20}{0.442} & \cellcolor{Cyan!20}{41.3} & \cellcolor{Goldenrod!20}{49.5} & \cellcolor{YellowGreen!80}{0.543} & \cellcolor{Cyan!80}{50.1} & \cellcolor{Goldenrod!80}{62.5}  \\
    2 & \cellcolor{YellowGreen!20}{0.425} & \cellcolor{Cyan!20}{40.5} & \cellcolor{Goldenrod!20}{46.1} & \cellcolor{YellowGreen!60}{0.554} & \cellcolor{Cyan!40}{50.7} & \cellcolor{Goldenrod!40}{65.0} & \cellcolor{YellowGreen!80}{0.572} & \cellcolor{Cyan!80}{52.0} & \cellcolor{Goldenrod!80}{67.6}  \\
    3 & \cellcolor{YellowGreen!40}{0.538} & \cellcolor{Cyan!40}{49.7} & \cellcolor{Goldenrod!40}{62.0} & \cellcolor{YellowGreen!80}{0.573} & \cellcolor{Cyan!80}{52.0} & \cellcolor{Goldenrod!80}{67.0} & \cellcolor{YellowGreen!80}{0.578} & \cellcolor{Cyan!80}{52.4} & \cellcolor{Goldenrod!80}{67.8} \\
    
    \midrule

    \multicolumn{10}{c}{\bf{\#Query Layers}=2} \\
    \midrule
    1 & \cellcolor{YellowGreen!10}{0.376} & \cellcolor{Cyan!10}{36.4} & \cellcolor{Goldenrod!10}{39.6} & \cellcolor{YellowGreen!40}{0.540} & \cellcolor{Cyan!40}{50.0} & \cellcolor{Goldenrod!40}{61.9} & \cellcolor{YellowGreen!80}{0.572} & \cellcolor{Cyan!80}{52.1} & \cellcolor{Goldenrod!80}{66.8}  \\
    2 & \cellcolor{YellowGreen!20}{0.425} & \cellcolor{Cyan!20}{40.5} & \cellcolor{Goldenrod!20}{46.0} & \cellcolor{YellowGreen!60}{0.551} & \cellcolor{Cyan!40}{50.7} & \cellcolor{Goldenrod!40}{62.1} & \cellcolor{YellowGreen!80}{0.576} & \cellcolor{Cyan!80}{52.5} & \cellcolor{Goldenrod!80}{67.5}  \\
    3 & \cellcolor{YellowGreen!40}{0.539} & \cellcolor{Cyan!40}{49.8} & \cellcolor{Goldenrod!40}{61.9} & \cellcolor{YellowGreen!80}{0.572} & \cellcolor{Cyan!80}{51.5} & \cellcolor{Goldenrod!80}{67.2} & \cellcolor{YellowGreen!80}{0.578} & \cellcolor{Cyan!80}{52.2} & \cellcolor{Goldenrod!80}{68.1} \\
    
    \midrule

    \multicolumn{10}{c}{\bf{\#Query Layers}=3} \\
    \midrule
    1 & \cellcolor{YellowGreen!10}{0.379} & \cellcolor{Cyan!10}{37.0} & \cellcolor{Goldenrod!10}{39.7} & \cellcolor{YellowGreen!60}{0.554} & \cellcolor{Cyan!40}{50.7} & \cellcolor{Goldenrod!60}{64.6} & \cellcolor{YellowGreen!80}{0.568} & \cellcolor{Cyan!80}{51.7} & \cellcolor{Goldenrod!80}{66.8} \\
    2 & \cellcolor{YellowGreen!20}{0.428} & \cellcolor{Cyan!20}{40.9} & \cellcolor{Goldenrod!20}{46.2} & \cellcolor{YellowGreen!80}{0.569} & \cellcolor{Cyan!80}{51.8} & \cellcolor{Goldenrod!80}{67.1} & \cellcolor{YellowGreen!80}{0.575} & \cellcolor{Cyan!80}{52.0} & \cellcolor{Goldenrod!80}{67.9} \\
    3 & \cellcolor{YellowGreen!40}{0.537} & \cellcolor{Cyan!40}{49.6} & \cellcolor{Goldenrod!40}{61.9} & \cellcolor{YellowGreen!80}{0.572} & \cellcolor{Cyan!80}{52.3} & \cellcolor{Goldenrod!80}{67.0} & \cellcolor{YellowGreen!80}{0.577} & \cellcolor{Cyan!80}{51.9} & \cellcolor{Goldenrod!80}{67.3} \\

    \bottomrule
    \end{tabular}
    \end{adjustbox}
    
  \caption{Results of different numbers of layers on the WN18RR dataset, with each metric represented by a different color. The darkness or lightness of the color corresponds to the performance of the metric.}
  \label{tab:app.tab.5}
  \vspace{-0.5cm}
\end{table*}


\end{document}